\newtheorem{theorem}{Theorem}
\newtheorem{lemma}[theorem]{Lemma}
\newtheorem{corollary}{Corollary}[theorem]
\title{Learning from Crowds by Modeling Common Confusions}
\author {
    Zhendong Chu, Jing Ma, Hongning Wang \\
}
\begin{document}

\maketitle

\begin{abstract} Crowdsourcing provides a practical way to obtain large amounts of labeled data at a low cost. However, the annotation quality of annotators varies considerably, which imposes new challenges in learning a high-quality model from the crowdsourced annotations. In this work, we provide a new perspective to decompose annotation noise into \emph{common noise} and \emph{individual noise} and differentiate the source of confusion based on instance difficulty and annotator expertise on a per-instance-annotator basis. We realize this new crowdsourcing model by an end-to-end learning solution with two types of noise adaptation layers: one is shared across annotators to capture their commonly shared confusions, and the other one is pertaining to each annotator to realize individual confusion. To recognize the source of noise in each annotation, we use an auxiliary network to choose from the two noise adaptation layers with respect to both instances and annotators. Extensive experiments on both synthesized and real-world benchmarks demonstrate the effectiveness of our proposed common noise adaptation solution. \end{abstract}

\section{Introduction}
The availability of large amounts of labeled data is often a prerequisite for applying supervised learning solutions in practice. Crowdsourcing makes it possible to collect massive labeled data in both time- and cost-efficient manner \cite{buecheler2010crowdsourcing}. However, because of varying and unknown expertise of annotators, crowdsourced labels are usually noisy, which naturally lead to an important research problem: \emph{how to train an accurate learning model with only crowdsourced annotations?}

The first step to estimate an accurate learning model from crowdsourced annotations is to properly model the generation of such data. In this work, we focus on the crowdsourced classification problem. The seminal work from \citet{dawid1979maximum} (known as the DS model) assumes that each annotator has his/her own class-dependent confusion when providing annotations to instances. This is modeled by an annotator-specific confusion matrix, whose entries are the probability of flipping one class into another. The DS model has become the cornerstone of most learning from crowds solutions; and mainstream solutions perform label aggregation prior to classifier training: their key difference lies on different label aggregation methods based on the DS model \cite{venanzi2014community, zhang2014spectral, whitehill2009whose}. Recent developments focus more on unified solutions, where variants of the Expectation-Maximization (EM) algorithm are proposed to integrate label aggregation and classifier training \cite{albarqouni2016aggnet, cao2019max, raykar2010learning}. Typically, such solutions treat the classifier's predictions as latent variables, which are then mapped to the observed crowdsourced labels using individual confusion matrices of annotators. \citet{rodrigues2018deep} further fuse label inference and classifier training in an end-to-end approach using neural networks, where the gradient from label aggregation is directly propagated to estimate the annotators' confusion matrices. 
\citet{tanno2019learning} propose a similar solution but encourage the annotator confusion matrix to be close to an identity matrix by trace regularization.

All existing DS-model-based solutions assume noise in crowdsourced labels is only caused by individual annotators' expertise. However, it is not uncommon that different annotators would share common confusion about the labels. 
For example, when a \emph{bird} in an image is too small, every annotator has a chance to confuse it with an \emph{airplane} because of the background sky. We hypothesize that on an instance the annotator is confident about, he/she is more likely to use his/her expertise to provide a label (i.e., introducing individualized noise), while he/she would use common sense to label those unconfident ones. We empirically evaluate this hypothesis on two public crowdsourcing datasets, one for image labeling and one for music genre classification (more details of the datasets can be found in the Experiment Section), and visualize the results in Figure \ref{fig:noise_anlysis}. On both datasets, there are quite some commonly made mistakes across annotators. For example, on the image labeling dataset LabelMe, 61.0\% annotators mistakenly labeled \emph{street} as \emph{inside city} and 44.1\% of them mislabeled \emph{open country} as \emph{forest}; on the music classification dataset, 63.6\% annotators mislabeled \emph{metal} as \emph{rock} and 38.6\% of them mislabeled \emph{disco} as \emph{pop}. 
The existence of such shared confusions across annotators directly affects label aggregation: the majority of annotators are not necessarily correct, as their mistakes are no longer independent (e.g., those large off-diagonal entries in Figure \ref{fig:noise_anlysis}).
This is against the fundamental assumption in the DS model, and strongly urges new noise modeling to better handle real-world crowdsourced data.


\begin{figure}
\centering
\begin{subfigure}{.24\textwidth}
  \centering
  \includegraphics[height=2.6cm]{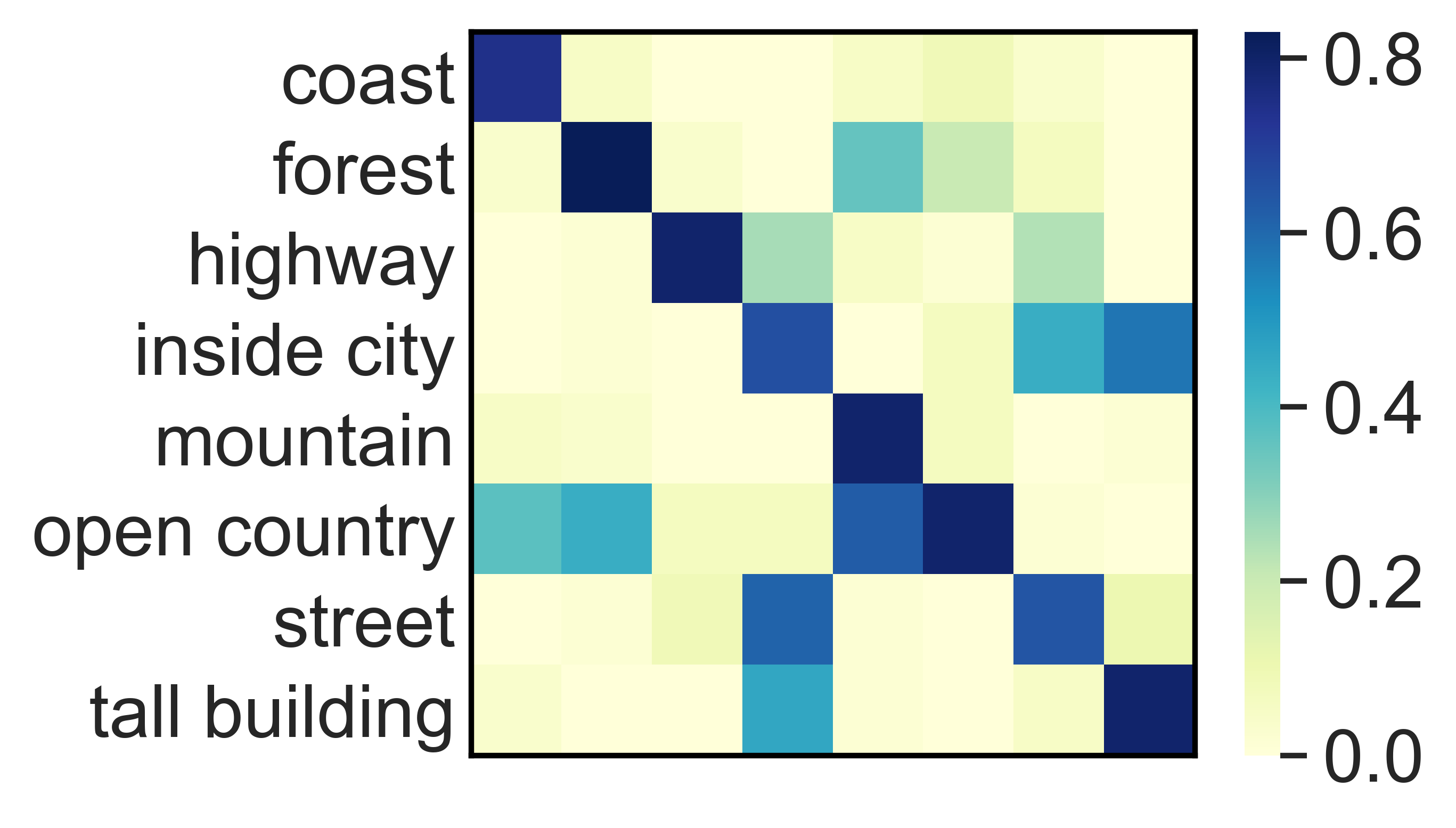}
  \caption{LabelMe}
  \label{fig:labelme}
\end{subfigure}%
\begin{subfigure}{.24\textwidth}
  \centering
  \includegraphics[height=2.44cm]{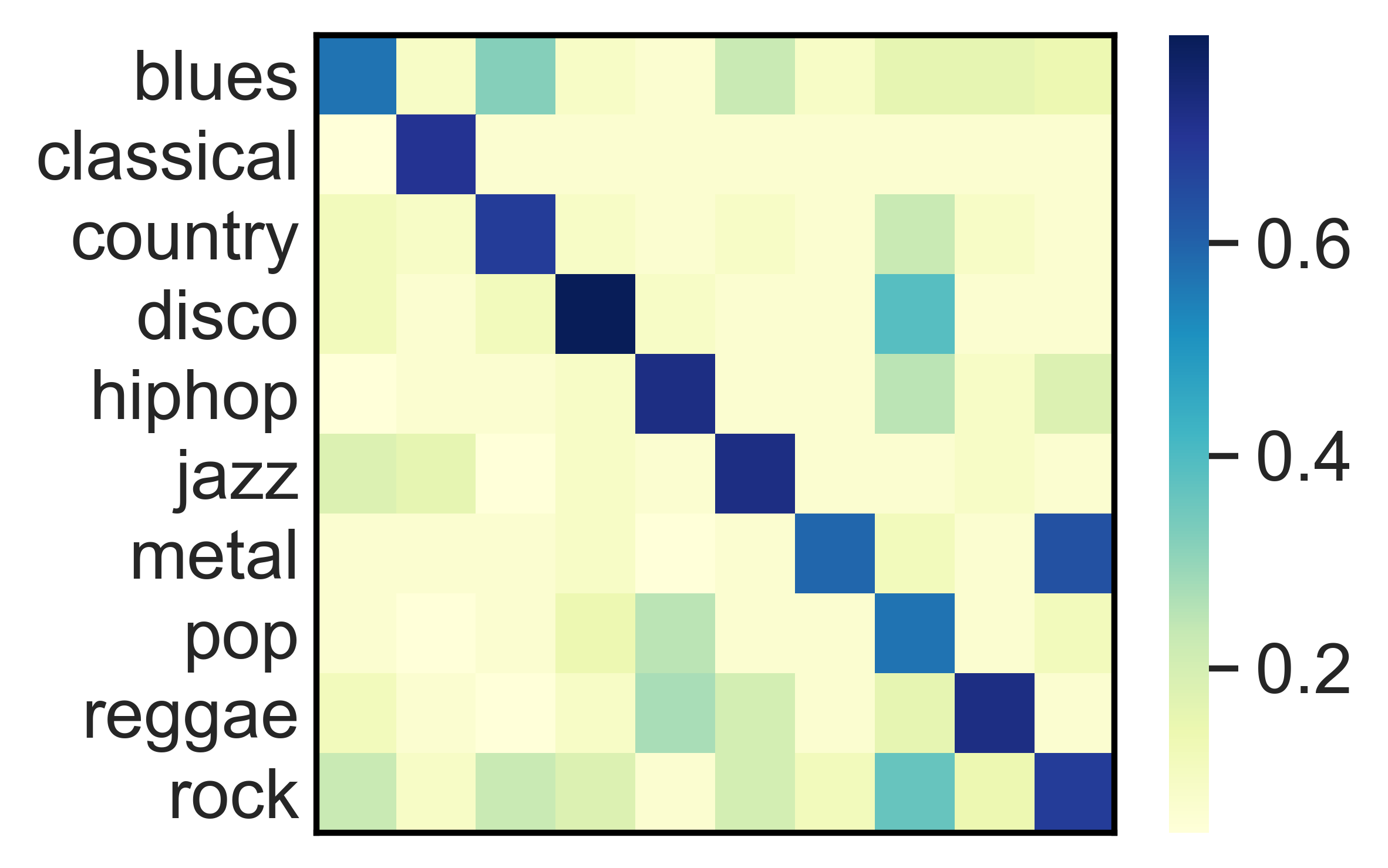}
  \caption{Music}
  \label{fig:music}
\end{subfigure}
\caption{Analysis of commonly made mistakes across annotators on two real-world crowdsourcing datasets. The value of each entry in the heatmap denotes the percentage of annotators with this confusion pair (e.g., mistakenly label \emph{street} as \emph{inside city} on LabelMe dataset).}
\label{fig:noise_anlysis}
\end{figure}


Moving beyond the independent noise assumption in the family of DS models \cite{dawid1979maximum, rodrigues2018deep}, we decompose annotation noise into two sources, \emph{common noise} and \emph{individual noise}, and differentiate the source of noise based on both annotators and instances. We refer to the annotation confusions shared across annotators as common noise, and model it by a global confusion matrix shared by all annotators. In the meanwhile, we also maintain annotator-specific confusion matrices for individual noise modeling.
We still treat ground-truth labels of instances as latent variables, but map them to noisy annotations by two parallel confusion matrices, to capture these different sources of noise. We determine the choice of confusion matrices on a per-instance-annotator basis, by explicitly modeling of annotator expertise and instance difficulty \cite{whitehill2009whose, yin2017aggregating}. 
To leverage the power of representation learning to model annotator expertise and instance difficulty, we realize all our model components using neural networks. In particular, we model the two types of confusion matrices as two parallel noise adaptation layers \cite{goldberger2016training}. For each annotator-instance pair, the classifier first maps the instance to a latent class label, then an auxiliary network decides which noise adaptation layer to map the latent class label to the observed annotation. 
Cross-entropy loss is counted on the predicted annotations for end-to-end training of these components. 
We name this approach \emph{CoNAL} - learning from crowds with \underline{\textbf{co}}mmon \underline{\textbf{n}}oise \underline{\textbf{a}}daptation \underline{\textbf{l}}ayers.  
Extensive experiments show considerable improvement of our new noise modeling approach against a rich set of baselines on two synthesized datasets, including a fully synthesized dataset and one based on CIFAR-10 dataset with various settings of noise generation, as well as two real-world datasets, e.g., LabelMe for image classification, and Music for music genre classification.

\section{Related Works}
Several existing studies focused on modeling the different roles of instance and annotator in crowdsourced data. \citet{whitehill2009whose} model the accuracy of each annotation, which depends on instance difficulty and annotator expertise, to weigh each instance in final majority vote. \citet{welinder2010multidimensional} model each annotator as a multi-dimensional classifier and consider instance difficulty as single dimension latent variable. \citet{zhou2012learning} propose a minimax entropy principle on a probability distribution over annotators, instances and annotations, in which by minimizing entropy instance confusability and annotator expertise are naturally inferred. \citet{khetan2016achieving} and \citet{shah2016permutation} consider generalized DS models which model the instance difficulty. 
Instead of simply using a single scalar to model instance difficulty and annotator expertise as in previous works, we model them by learning their corresponding representations via an auxiliary network, which can better capture the shared statistical pattern across observed annotations.

Our method is closely related to several existing DS-based models considering relations among annotators; but it is also clearly distinct from them. \citet{kamar2015identifying} use a global confusion matrix to capture the identical mistakes by all annotators, and it is designed to replace the individual matrix when observations of an annotator are rare. Moreover, the choice of confusion matrix in this solution only depends on the number of annotations an annotator provided. This unnecessarily reflects the annotator expertise, as the task assignment is typically out of their control in crowdsourcing. \citet{venanzi2014community} and \citet{imamura2018analysis} cluster annotators to generate their own confusion matrices from a shared community-wide confusion matrix. However, the above approaches still assume a single underlying noise source, and thus they do not consider the difference between global (or community-level) and individual confusions. \citet{li2019exploiting} explore the correlation of annotation across annotators by classifying them into auxiliary subtypes under different ground-truth classes. However, the characteristics of each annotator are missing since they are only represented by a specific subtype. In our work, we still characterize individual annotators by modeling their own confusions.

\section{Common Confusion Modeling in Crowdsourced Data}
In this section, we formulate our problem-solving framework for training classifiers directly from crowdsourced labels, based on the insight of common confusion modeling across annotators. We first describe the notations and our probabilistic modeling of the noisy annotation process, considering both common and individual confusions. This probabilistic model of noisy annotations is the basis of the end-to-end neural solution we develop in this paper. 

\subsection{Notations and Probabilistic Modeling}
Assume we have $N$ instances labeled by $R$ annotators out of $C$ possible classes. We define $\boldsymbol{x}_i$ as the feature vector of the $i$-th instance and $y_i^r$ as its label provided by the $r$-th annotator. Denote $z_i$ as the unobservable ground-truth label for the $i$-th instance, which is considered as a latent variable sampled from a multinomial distribution parameterized by $\{p(z_i=c|\boldsymbol{x}_i)\}_{c=1}^C$. For simplicity, we collectively define $X=\{\boldsymbol{x}_i\}_{i=1}^N$, $Y=\{y_i^r\}_{i=1, r=1}^{N, R}$ and $Z=\{z_i\}_{i=1}^N$. The final goal of learning from crowds is to obtain the classifier $P(Z|X)$ only with crowdsourced annotations $Y$.

Similar to the DS-based models (see Figure \ref{fig:ds_graph} for reference), the confusion of the $r$-th annotator is measured by an annotator-specific confusion matrix $\pi^r$, in which the $(z, z')$-element $\pi^r_{z, z'}$ denotes the probability that annotator $r$ will label the true label $z$ as $z'$. Aside from individual confusion, the key assumption of our solution is that annotation mistakes can also be introduced by common confusion, which is modeled by a globally shared confusion matrix $\pi^g$ across all annotators. We define the confusion matrices set as $\Pi = \{\pi^{1:R}, \pi^g\}$. We associate a Bernoulli random variable $s^r_i\sim B(\omega^r_i)$ with each annotation $y_i^r$ to differentiate the source of noise on it: $s^r_i$=1 if the confusion is caused by the common noise, where $w_i^r$ is the probability of the global confusion matrix being chosen by annotator $r$ on instance $i$ (see Figure \ref{fig:com_graph}). Denote the set of parameters governing the generation of $s_i^r$ across all annotations as $\Omega$.

Suggested by the successful practice in modeling crowdsourced data, we also impose the following two commonly made assumptions: 1) each annotator provides their annotations independently  \cite{dawid1979maximum}; and 2) each annotation is independent from the instance's features given the ground-truth labels \cite{yan2014learning, rodrigues2018deep}. We should note the first assumption is \emph{not} contradicting to our common confusion modeling: as the annotators can independently choose the shared common noise model to generate their annotations, the resulting observed annotations are no longer independent across annotators. As a result, the complete data likelihood of observed annotations under our model can be defined as, 
\begin{align}
    \label{eq:raw_likelihood}
    p\left(Y, Z| X, \Pi, \Omega\right) &\!=\! \prod_{i=1}^N\prod_{r=1}^R\sum^C_{z=1} p\left(y_i^r|z_i; \Pi,  \omega^r_i)p(z_i|\boldsymbol{x}_i\right), \\
     p\left(y_i^r|z_i; \Pi, \omega^r_i\right) &\!=\! \omega^r_ip\left(y_i^r|z_i, \pi^g\right) +  \left(1-\omega^r_i\right)p\left(y_i^r|z_i, \pi^r\right). \nonumber
\end{align}


Based on the above imposed problem structure, we derive an information-theoretical lower bound about  the resulting noise modeling quality. Let $\hat{Z}$ be the estimated true labels of all instances. Noise modeling quality is measured by the error rate given by $\mathcal{L}(\hat{Z}, Z) = \frac{1}{N}\sum_{i=1}^N\mathbb{I}\left(\hat{z}_i\neq z_i\right)$, where $\mathbb{I}(\cdot)$ is an indicator function. Given the ground-truth instance-specific class distribution $\rho_i = \{\rho_{ic}\}_{c=1}^C$ and confusion matrices $\Pi$, we have the following theorem about the lower bound of minimax error rate of our model.

\begin{theorem}
     The minimax error rate of our model is lower bounded by 
\begin{align}
\label{eq:lower_bound}
    \textup{inf}_{\hat{Z}}  \textup{sup}_{Z\in [C]^N} &\mathbb{E}\left[\mathcal{L}(\hat{Z}, Z)\right] \\
    &\geq  \frac{1}{N^2\textup{log}\,C}\sum_{i=1}^NF(\rho_i, \Pi, \Omega)
    - \frac{\textup{log}\,2}{N^2\textup{log}\,C}, \nonumber \\
    F(\rho_i, \Pi, \Omega) =& H(\rho_i) - \sum_{r=1}^{R}\sum_{c=1}^C\sum_{c'=1}^C\rho_{ic}\rho_{ic'}\Big(\omega^r_i\, \textup{KL}(\pi^g_{c*} \parallel \pi^g_{c'*})  \nonumber\\
    &+ (1-\omega_i^r)\,\textup{KL}\left(\pi^r_{c*}\parallel \pi^r_{c'*}\right)\Big). \nonumber
\end{align}
\label{trm1LB}
\end{theorem}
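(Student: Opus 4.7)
The plan is to reduce the minimax risk to a Bayes risk under the natural product prior $Z_i\sim\rho_i$ and then apply Fano's inequality to the vector-valued $Z\in[C]^N$, bounding the resulting conditional entropy via a standard KL upper bound on mutual information. Concretely, since taking a supremum over $Z$ dominates any average,
$$\sup_{Z\in[C]^N}\mathbb{E}[\mathcal{L}(\hat Z,Z)]\ge\mathbb{E}_{Z\sim\prod_i\rho_i}\mathbb{E}_{Y\mid Z,X}[\mathcal{L}(\hat Z,Z)].$$
The elementary inequality $\mathbb{I}(\hat Z\neq Z)\le\sum_i\mathbb{I}(\hat z_i\neq z_i)=N\,\mathcal{L}(\hat Z,Z)$ gives $\mathbb{E}[\mathcal{L}]\ge P_e/N$ where $P_e=P(\hat Z\neq Z)$. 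Fano's inequality on the alphabet of size $C^N$ then produces $P_e\ge (H(Z\mid Y,X)-\log 2)/(N\log C)$, which already yields both the $1/(N^2\log C)$ scaling and the $-\log 2/(N^2\log C)$ correction in the statement.

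Next I would lower bound $H(Z\mid Y,X)$ by $\sum_i F(\rho_i,\Pi,\Omega)$. Writing $H(Z\mid Y,X)=H(Z\mid X)-I(Z;Y\mid X)$ and using that the instances are independent under the chosen prior and that $Y_i$ is conditionally independent of everything else given $Z_i$ (the two model assumptions stated just before the theorem), this decomposes as $\sum_i\bigl(H(\rho_i)-I(Z_i;Y_i^{1:R}\mid X_i)\bigr)$. For each instance I would invoke the standard convexity bound
$$I(Z_i;Y_i^{1:R}\mid X_i)\le\sum_{c,c'}\rho_{ic}\rho_{ic'}\,\mathrm{KL}\bigl(P_{Y_i^{1:R}\mid Z_i=c}\parallel P_{Y_i^{1:R}\mid Z_i=c'}\bigr),$$
which follows from $I(A;B)=\sum_a p(a)\mathrm{KL}(P_{B\mid a}\parallel P_B)$ together with convexity of KL in its second argument. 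Conditional independence of the $R$ annotators given $Z_i$ turns the joint KL into a sum $\sum_r\mathrm{KL}\bigl(P_{Y_i^r\mid c}\parallel P_{Y_i^r\mid c'}\bigr)$, and the per-annotator mixture $P_{Y_i^r\mid z_i=c}=\omega_i^r\pi^g_{c*}+(1-\omega_i^r)\pi^r_{c*}$ together with joint convexity of KL in both arguments yields
$$\mathrm{KL}\bigl(P_{Y_i^r\mid c}\parallel P_{Y_i^r\mid c'}\bigr)\le\omega_i^r\,\mathrm{KL}(\pi^g_{c*}\parallel\pi^g_{c'*})+(1-\omega_i^r)\,\mathrm{KL}(\pi^r_{c*}\parallel\pi^r_{c'*}),$$
which is precisely the term appearing inside $F$.

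The main technical step I expect some care on is this last convexity application: it is the mixture structure that forces the two KL contributions to appear with exactly the weights $\omega_i^r$ and $1-\omega_i^r$, and it is essential that KL be jointly convex (not just convex in each argument separately) because both sides of the divergence are mixtures with the same weight. A secondary point that needs attention is keeping the constants tight when one goes from Fano's binary-entropy form $H(P_e)\le\log 2$ through the inequality $\log(C^N-1)\le N\log C$, so that the $-\log 2/(N^2\log C)$ tail matches exactly. Once the KL bound is assembled and plugged into the Fano step of the first paragraph, the claimed inequality follows directly.
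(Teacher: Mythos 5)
Your proposal is correct and follows essentially the same route as the paper: the paper invokes the Fano-based minimax lower bound for DS models from \citet{imamura2018analysis} (your first two paragraphs reconstruct exactly that cited result from first principles) and then, just as you do, reduces the per-annotation mixture divergence $\textup{KL}\bigl(\omega_i^r\pi^g_{c*}+(1-\omega_i^r)\pi^r_{c*}\parallel \omega_i^r\pi^g_{c'*}+(1-\omega_i^r)\pi^r_{c'*}\bigr)$ to the $\omega$-weighted sum of component divergences --- the paper via the log-sum inequality, which is the same fact as the joint convexity of KL that you invoke. The only difference is that your write-up is self-contained where the paper defers the Fano/mutual-information step to the citation.
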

where 
$H(\rho_i)=-\sum_{c=1}^C\rho_{ic}\textup{log}\rho_{ic}$ is the entropy of ground-truth class distribution and $\pi_{c*}$ is the $c$-th row in confusion matrix $\pi$. The proof and further discussion of Theorem \ref{trm1LB} is provided in Appendix A. 

\noindent\textbf{Remarks.} This result extends the known lower bound result of DS models \cite{imamura2018analysis}. Lower bound on the error rate measures the difficulty of a crowdsourcing problem. Theorem \ref{trm1LB} suggests the proposed  decomposition has the potential to further reduce the lower bound, i.e., to obtain better inferred true labels. To understand this result, we should first note that the lower bound mainly depends on the KL distance between the class distributions conditioned on different ground-truth classes, as defined in $F(\rho_i, \Pi, \Omega)$, i.e., how two different classes will be confused with other classes. The more different they are (i.e., a larger KL distance), the easier one can differentiate the two from the observed noisy labels. For example, consider a crowdsourced dataset where an annotator labels a set of instances as \emph{airplane}; but among them, 50\% cases should be \emph{bird}, and the other 50\% should be \emph{spacecraft}. Intuitively, without any additional knowledge, it is hard to determine the true label when he/she labels an instance as \emph{airplane}. And this is asserted by Theorem \ref{trm1LB}: If we only used a single confusion matrix for this annotator, the conditional class distributions for \emph{bird} and \emph{spacecraft} will be pushed closer, because their entries on \emph{airplane} are close. This causes a smaller KL term in $F(\rho_i, \Pi, \Omega)$ between \emph{bird} and \emph{spacecraft} (e.g., setting $\omega_i^r$=0 for all instances in annotator $r$). But if we knew that the confusion between \emph{bird} and \emph{airplane} is caused by common noise, and the confusion between \emph{spacecraft} and \emph{airplane} is caused by individual noise, these mistakes could be attributed to two confusion matrices separately, which eliminates the misleading similarity between the conditional probabilities for \emph{bird} and \emph{spacecraft} caused by \emph{airplane}. 

\begin{figure}
\centering
\begin{subfigure}{.2\textwidth}
  \centering
  \includegraphics[width=4.2cm]{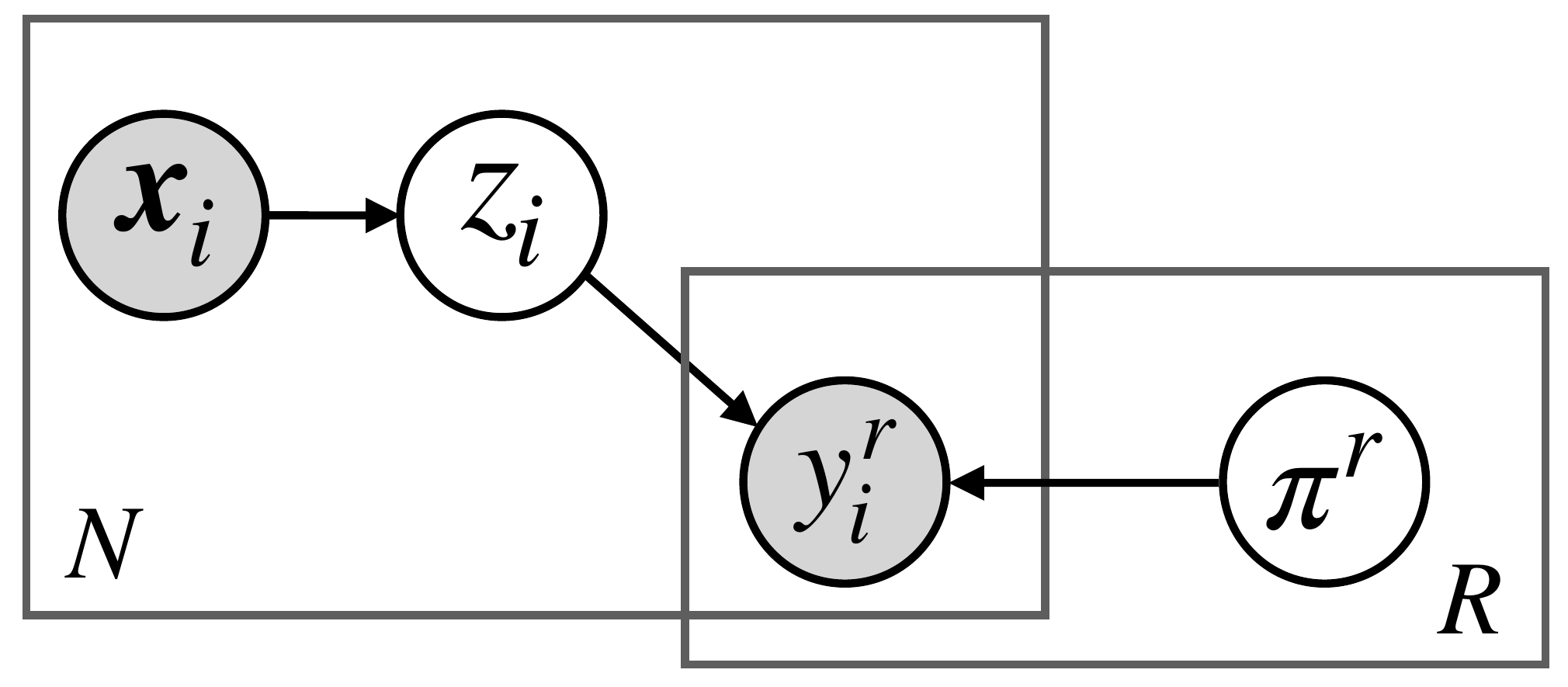}
  \caption{DS model.}
  \label{fig:ds_graph}
\end{subfigure}%
\begin{subfigure}{.32\textwidth}
  \centering
  \includegraphics[width=4.2cm]{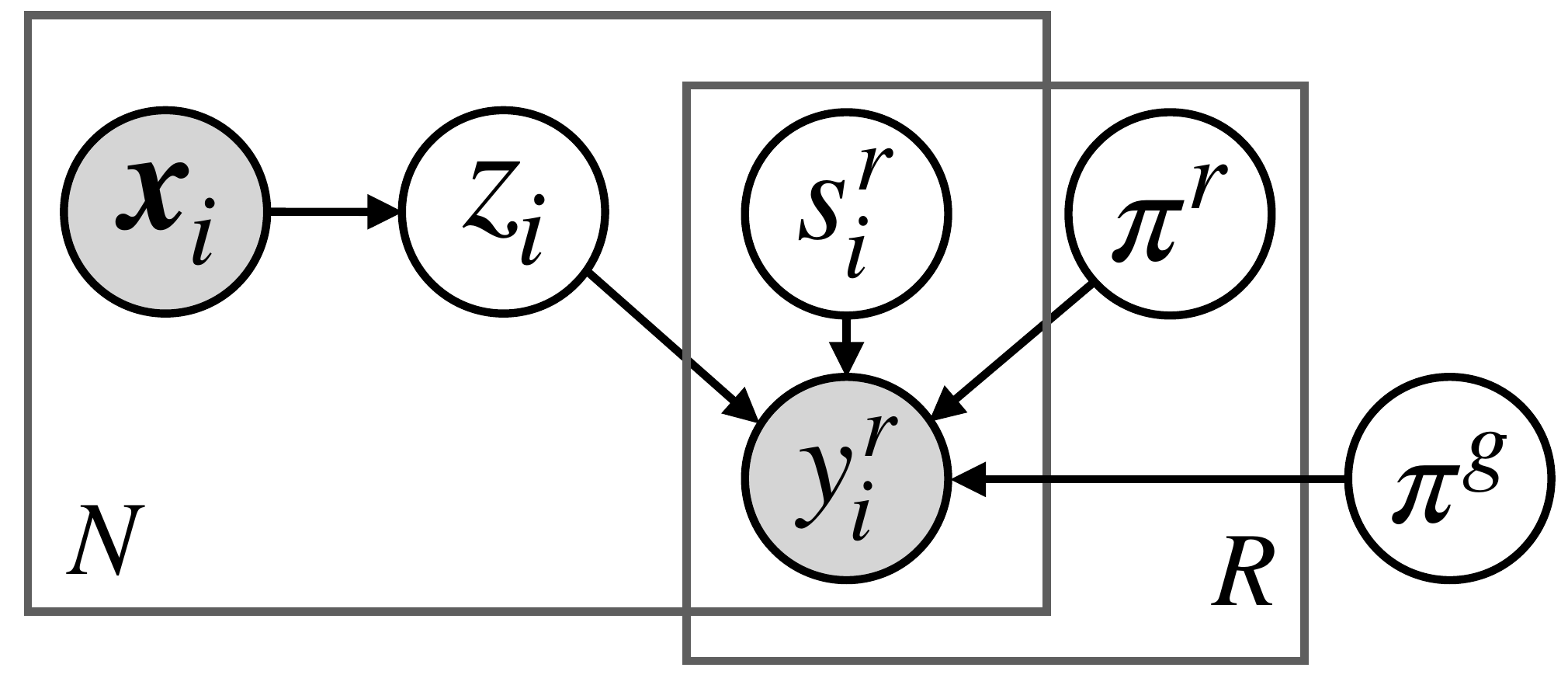}
  \caption{Common noise model.}
  \label{fig:com_graph}
\end{subfigure}
\caption{Graphical model presentations of DS model and our common noise model.}
\label{fig:graphical_model}
\end{figure}


\subsection{End-to-end Learning Framework}

To apply our noise modeling in crowdsourced data, we need to estimate the confusion matrices $\Pi$ together with the classifier. Instead of building a vanilla tabular model for them, we realize them using neural models, to take advantage of the power of representation learning.
In particular, we map the output of the classifier to noisy annotations by two types of confusion layers, which we refer to as noise adaptation layers \cite{goldberger2016training}. We also introduce an auxiliary network that takes both annotator and instance as input to predict the choice of these two noise adaptation layers. 
Since we treat the ground-truth label of an instance as a latent variable, the Expectation Maximization (EM) algorithm becomes a natural choice for model learning, as typically done in literature \cite{albarqouni2016aggnet, rodrigues2018deep, bertsekas2014constrained}. For the integrity of work, we provide the derived EM algorithm in Appendix B for interested readers. However, the EM-based algorithm has several clear drawbacks in our solution: 1) In crowdsourced data, because the annotators typically only label a small proportion of instances, EM-based algorithm becomes very sensitive to the initialization of model parameters. It can easily cause instability issues in training a neural network model. 2) In every EM iteration, we need to retrain the neural network, which causes a huge overhead when handling large networks. Instead, we take an end-to-end approach to jointly perform latent variable inference and model parameter estimation. We define cross-entropy loss on the observed annotations and use error back-propagation to update the classifier's output and the network parameters simultaneously. 

\begin{figure}[!htp]
    \centering
    \includegraphics[width=8.2cm]{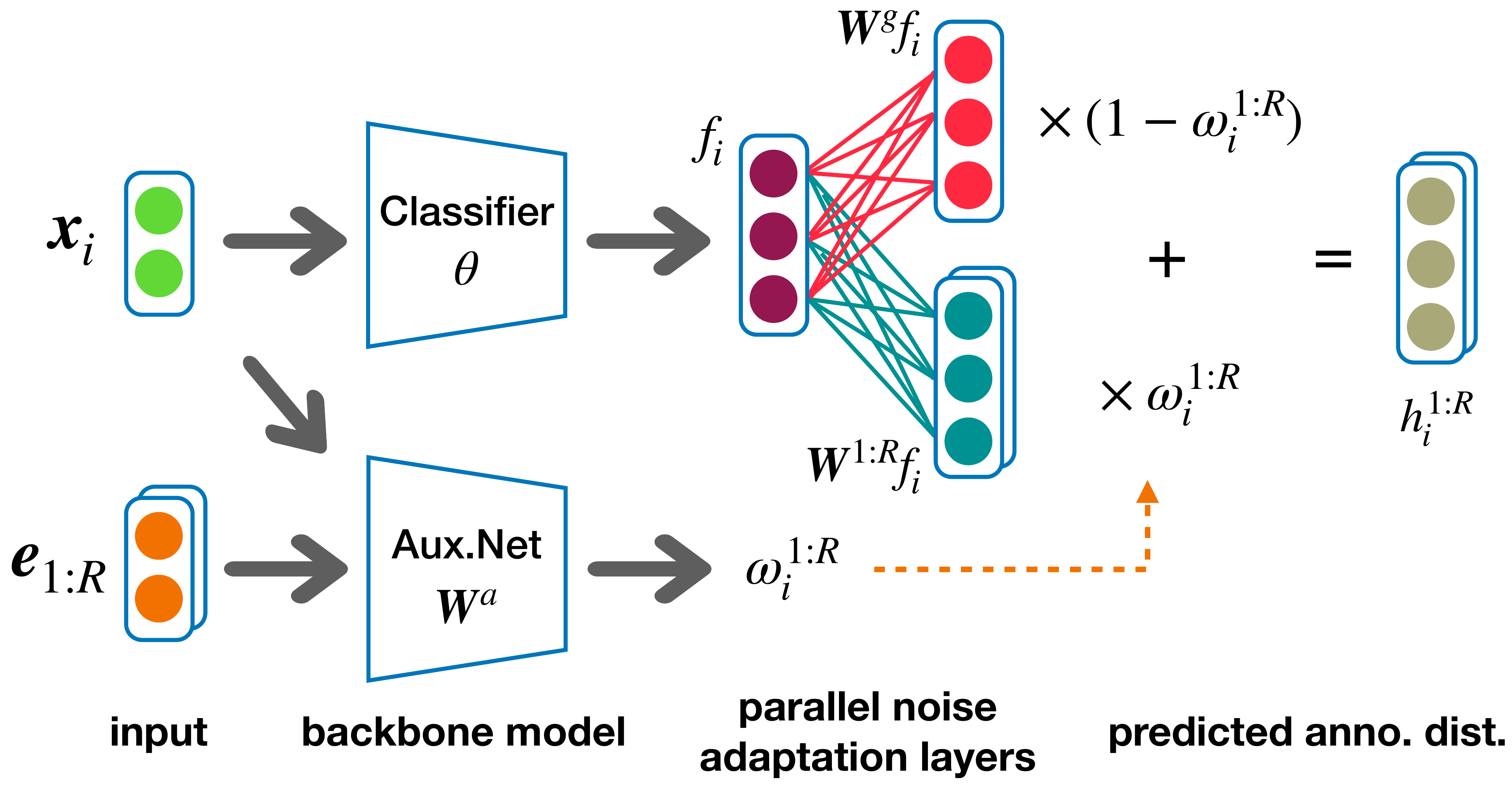}
    \caption{Overview of our framework for classification with 3 classes and $R$ annotators. }
    \label{fig:framework}
\end{figure}

We construct a neural network classifier with non-linear intermediate layers and a softmax output layer. The probability distribution of the predicted true label $z_i$ given the instance feature vector $\boldsymbol{x}_i$ is thus specified as $p_{\boldsymbol{\theta}}(z_i|\boldsymbol{x}_i)$, where $\boldsymbol{\theta}$ is the network parameter set including the softmax layer. We denote the immediate output of the classifier as $f_i=f(\boldsymbol{x}_i)\in \mathbb{R}^C$. We then use noise adaptation layers to map the classifier's output into noisy annotations, which are implemented by introducing additional softmax output layers on top of the output layer of the classifier (see overview in Figure \ref{fig:framework}). The weight matrices of the noise adaptation layers resemble confusion matrices $\Pi$ in a probabilistic sense. The output of the noise adaptation layer is thus the probability distribution of predicted annotation $p_{\boldsymbol{W}}(\hat{y}_i^r|f(\boldsymbol{x}_i))$, where $\boldsymbol{W}$ is the parameter set of the noise adaptation layer.


We consider two types of noise adaptation layers: one individual noise adaptation layer for every annotator parameterized by $\boldsymbol{W}^r$, and a common noise adaptation layer shared across all annotators parameterized by $\boldsymbol{W}^g$. The final probability distribution of annotations is obtained as,
\begin{equation*}
    p(\hat{y}_i^r|\boldsymbol{x}_i) = \omega_i^r\, p_{\boldsymbol{W}^g}(\hat{y}_i^r|f(\boldsymbol{x}_i)) + (1-\omega_i^r)\,p_{\boldsymbol{W}^r}(\hat{y}_i^r|f(\boldsymbol{x}_i)).
\end{equation*}
where $\omega_i^r$ governs the distribution that the mistake of annotator $r$ on instance $i$ is caused by common confusion $\pi^g$, denoted by the noise source indicator $s^r_i$.

As $s^r_i$ is unobservable, we introduce an auxiliary network to model $s^r_i\sim B(\omega_i^r)$ by parameterizing it over annotator expertise and instance difficulty, both of which are modeled via learnt representations by the auxiliary network. 
Specifically, as in our problem setup, every instance is associated with raw features, the auxiliary network takes instance feature $\boldsymbol{x}_i$ as input for learning instance $i$'s embedding $\boldsymbol{v}_i$. The same can be applied to annotator $r$, if any raw feature  $\boldsymbol{e}^r$ is available about the annotator, otherwise we use its one-hot encoding as input for learning annotator embedding $\boldsymbol{u}_r$. Then $\omega_i^r$ can be obtained as follows,
\begin{align}
    \begin{gathered}
        \boldsymbol{v}_i = \boldsymbol{W}_v\boldsymbol{x}_i + b_v, \boldsymbol{u}_r = \boldsymbol{W}_u\boldsymbol{e}_r + b_u, \\   \omega_{i}^r = \sigma(\boldsymbol{u}_r^\top  \boldsymbol{v}_i).
    \end{gathered}
    \label{eq:proportion}
\end{align}
where $(\boldsymbol{W}_v, b_v)$ and $(\boldsymbol{W}_u, b_u)$ are weight matrices and bias terms for annotator and instance embeddings, and $\sigma$ is a sigmoid function. To simplify our notations, we collectively refer the parameters in this auxiliary network as $\boldsymbol{W}^a$.
To avoid the magnitude of learnt $\boldsymbol{u}$ and $\boldsymbol{v}$ becoming extremely large or small, which causes numerical issues in estimating $\omega_{i}^r$, we normalize the learnt annotator and instance embeddings before computing their inner product.

Based on the above full specifications of our probabilistic modeling using neural networks, we are ready to estimate the network parameters. 
We can easily verify that, maximizing the likelihood of observed annotations given the input feature vectors as defined in Eq \eqref{eq:raw_likelihood} is equivalent to minimizing the cross-entropy loss between the observed annotations and predicted annotation distributions,
\begin{equation*}
    \mathcal{L}(\boldsymbol{\theta},  \boldsymbol{W}^g, \boldsymbol{W}^{1:R}, \boldsymbol{W}^a) = -\frac{1}{N}\sum_{i=1}^N\sum_{r=1}^R\sum_{j=1}^Cy^r_{ij}\textup{log}\,p_j(\hat{y}^r_{i}|\boldsymbol{x}_i).
\end{equation*}
where $y^r_{ij}=1$ if $y^r_i=j$; otherwise $y^r_{ij}=0$; and $p_j(\hat{y}^r_{i}|\boldsymbol{x}_i)$ refers to the $j$-th entry of the predicted annotation distribution. All parameters can be trained by back-propagation using gradient descent techniques, such as Adam \cite{kingma2014adam} and SGD \cite{goodfellow2016deep}. Once trained, in the testing phase, we can directly use the classifier to make predictions on new instances.

The gradient flow in back-propagation reveals how our common confusion modeling handles crowdsourced data. In the context of classification, we can simply view the introduced noise adaptation layer as performing a projection of gradients; and with a slight abuse of notations, we denote the output of our noise adaptation layers as $h^r_i=\omega_i^r\boldsymbol{W}^gf_i + (1-\omega_i^r)\boldsymbol{W}^rf_i$. Under the chain rule, the gradients are naturally decoupled with respect to different sources of noise,
\begin{equation}
    \frac{\partial \mathcal{L}}{\partial f_i} = \sum_{r=1}^R\frac{\partial \mathcal{L}}{\partial h^r_i}\frac{\partial h^r_i}{\partial f_i} = \sum_{r=1}^R \omega^r_i\frac{\partial \mathcal{L}}{\partial h^r_i}\boldsymbol{W}^g + (1-\omega^r_i) \frac{\partial \mathcal{L}}{\partial h^r_i} \boldsymbol{W}^r
    \label{eq:grad_flow}.
\end{equation}
It clearly shows confusion matrices reshape the gradients, which informs the classifier layer what the true label should be on an instance given its noisy annotations. The importance of each confusion matrix in shaping the classifier is determined by $\omega^r_i$, which infers the source of noise based on annotator expertise and instance difficulty. 

\begin{figure*}[!htp]
\begin{subfigure}{1\textwidth}
  \centering
  \includegraphics[width=15.1cm]{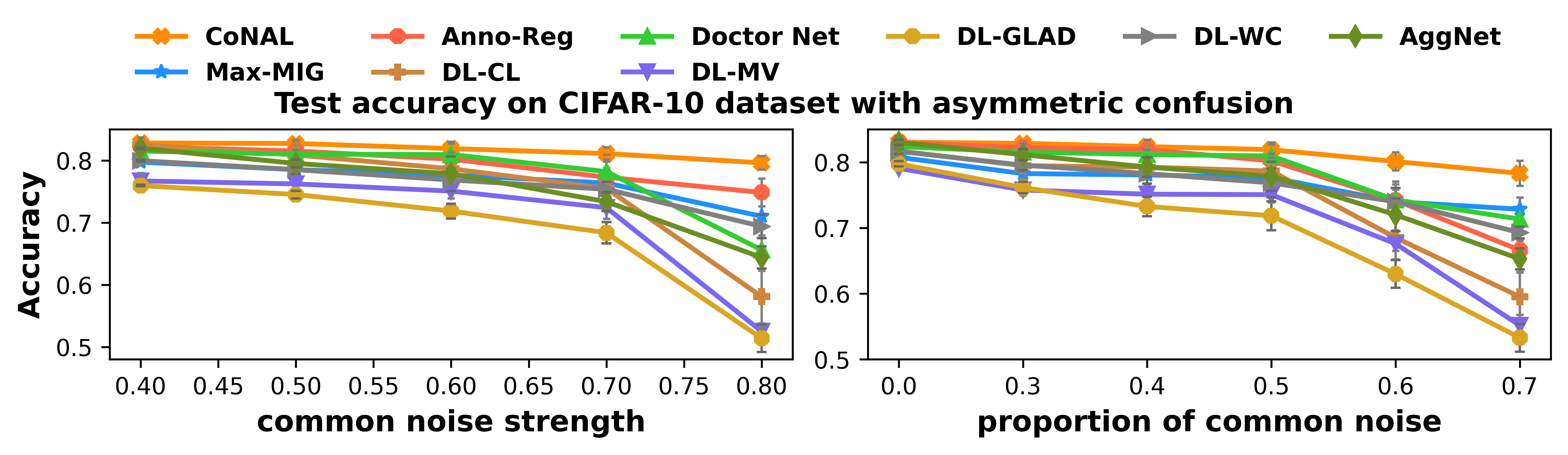}
  \label{fig:sub3}
\end{subfigure}

\begin{subfigure}{1\textwidth}
  \centering
  \includegraphics[width=15.1cm]{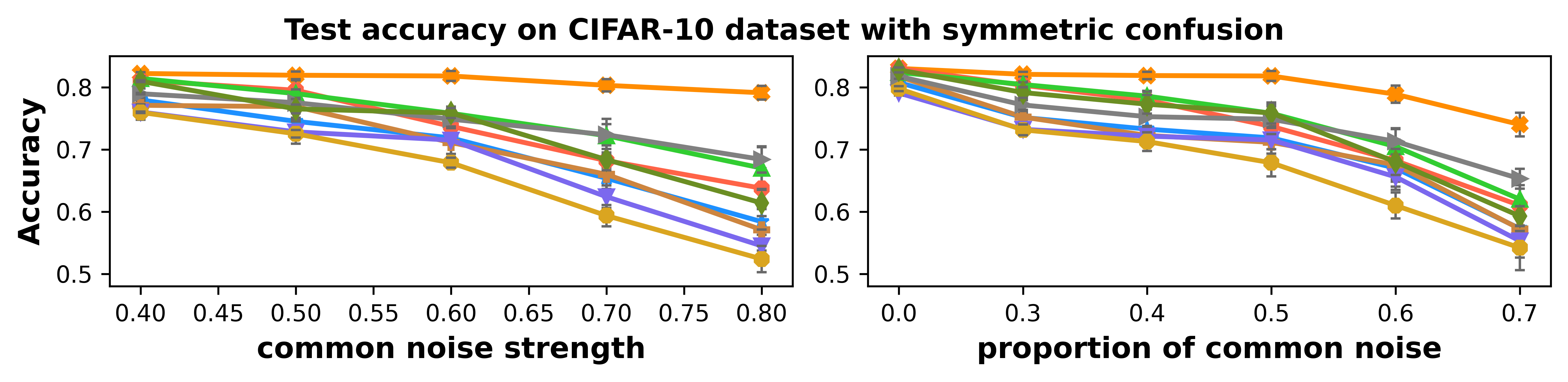}
  \label{fig:sub3}
\end{subfigure}
\caption{Results on CIFAR-10 dataset.}
\label{fig:cifar_results}
\end{figure*}

The gradients in Eq \eqref{eq:grad_flow} also suggest a potential bottleneck of our proposed solution: if the common and individual noise adaptation layers are unidentifiable, we cannot correctly attribute the noise, which is the key for our solution to perform according to Theorem \ref{trm1LB}. To avoid this, we add $\ell_2$-norm on the difference between the common and individual noise adaptation layers as a regularization term, to enforce them to be different. This presents our final loss function,
\begin{align*}
    \mathcal{L}(\boldsymbol{\theta},  \boldsymbol{W}^g, \boldsymbol{W}^{1:R}, \boldsymbol{W}^a) = &-\frac{1}{N}\sum_{i=1}^N\sum_{r=1}^R\sum_{j=1}^Cy^r_{ij}\textup{log}\,p_j(\hat{y}^r_i|\boldsymbol{x}_i)  \\
    &- \lambda\sum_{r=1}^R\left \| \boldsymbol{W}^g - \boldsymbol{W}^r \right \|_2
\end{align*}
where $\lambda$ is a hyper-parameter to control regularization.


\section{Experiments}
\label{sec:experiment}
We evaluate our method on both synthesized and real-world datasets. We consider a rich set of related solutions as our baselines, which can be divided into two categories:

\noindent 1) Methods with simple noise models. \textbf{DL-MV}: it learns a neural network classifier with labels aggregated by majority voting. \textbf{DL-CL} \cite{rodrigues2018deep}: it learns a neural classifier with designated layers to fit individual annotator confusions (so-called crowd layer). \textbf{Anno-Reg} \cite{tanno2019learning}: it improves DL-CL by imposing additional trace regularization on individual confusion matrices. \textbf{Doctor Net} \cite{guan2018said}: it learns a neural network for every annotator's annotations and aggregates the networks' output by weighted majority voting. \textbf{Max-MIG} \cite{cao2019max}: it jointly estimates a neural classifier and a label aggregation network using an information-theoretical loss function. 

\noindent 2) Methods with complex noise models. \textbf{DL-GLAD}: it learns a neural classifier with labels aggregated by GLAD \cite{whitehill2009whose}, where annotator ability and instance difficulty are modeled. \textbf{DL-WC}: it learns a neural classifier with labels aggregated by WC \cite{imamura2018analysis}, where similar annotators are clustered to share the same confusion matrix. \textbf{AggNet} \cite{albarqouni2016aggnet}: an EM-based deep model considering annotator sensitivity and specificity.

\subsection{Experiments on Synthesized Datasets}
We evaluate the proposed method under various settings of synthesized data. Particularly, we demonstrate the effectiveness of our model with different (1)  \emph{common confusion types}; (2) \emph{common noise strength}, which is defined as the sum of off-diagonal entries in the common confusion matrix; and (3) \emph{proportion of common noise}, which reflects the percentage of annotations introduced by common confusion. 

\noindent\textbf{Datasets description}. We generate synthesized crowdsourced data on two datasets, where we directly manipulate the number of annotators and annotation generation under a variety of settings. On the \textbf{Synthetic} dataset, we completely synthesized everything. 
We first sample a mean vector for every class and then sample instance features from a multi-variate Gaussian distribution parameterized by this mean vector. In particular, we randomly generate 10,000 instances with 6 classes, which are split into a 8,000-instance training set, a 1,000-instance validation set and a 1,000-instance testing set. The \textbf{CIFAR-10} dataset is generated based on the CIFAR-10 image classification dataset \cite{krizhevsky2009learning}. It consists of 60,000 $32 \times 32$ color images from 10 classes, which are split into a 40,000-instance training set, a 10,000-instance validation set and a 10,000-instance testing set. Image features are used to train the neural classifier on this dataset. In both datasets, each instance in the training set is labeled by averaging 3 randomly selected annotators out of 30 in total.

\begin{figure*}[!htp]
\centering
\begin{subfigure}{0.24\textwidth}
  \centering
  \includegraphics[width=4cm]{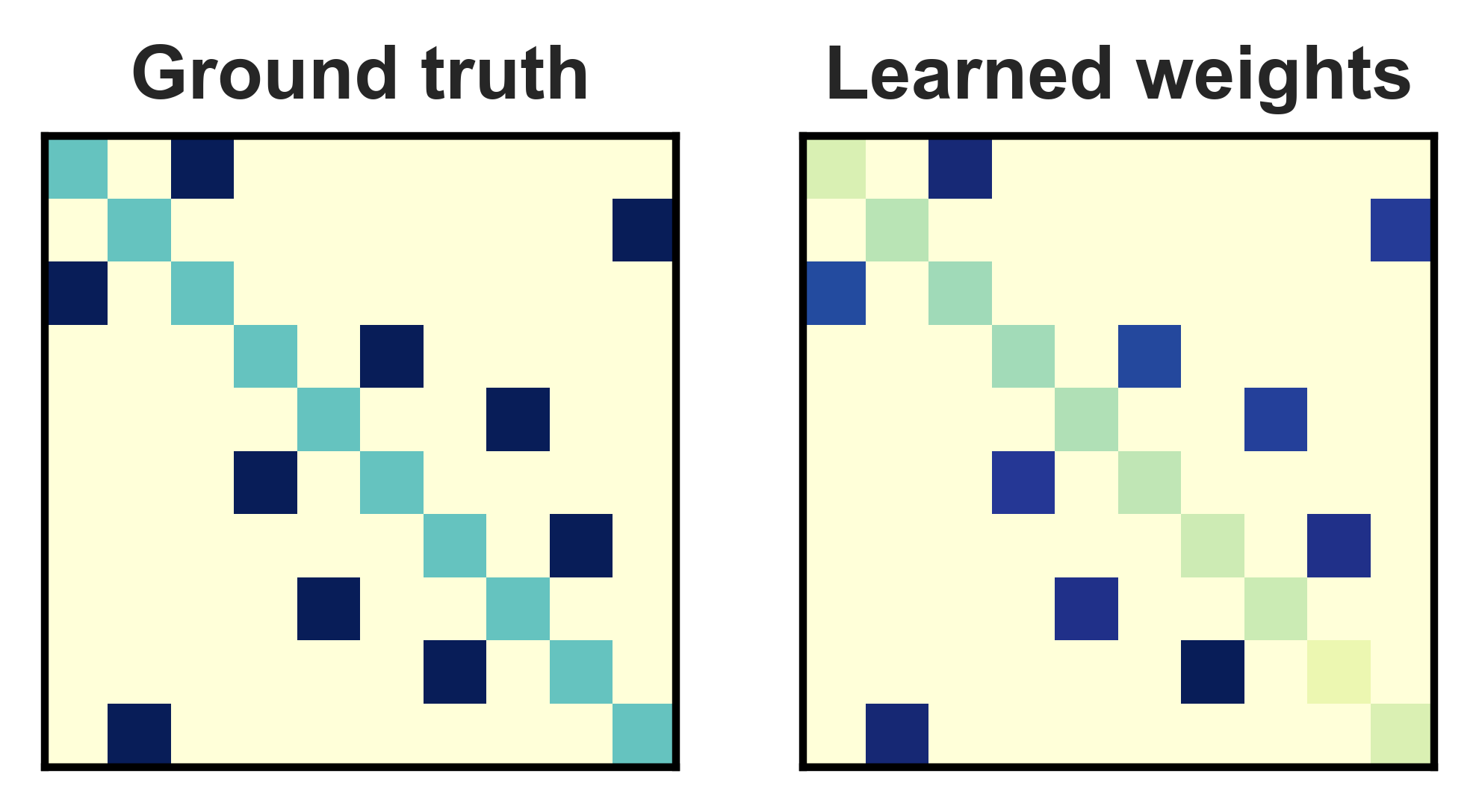}
  \caption{common noise}
\end{subfigure}
\begin{subfigure}{0.24\textwidth}
  \centering
  \includegraphics[width=4cm]{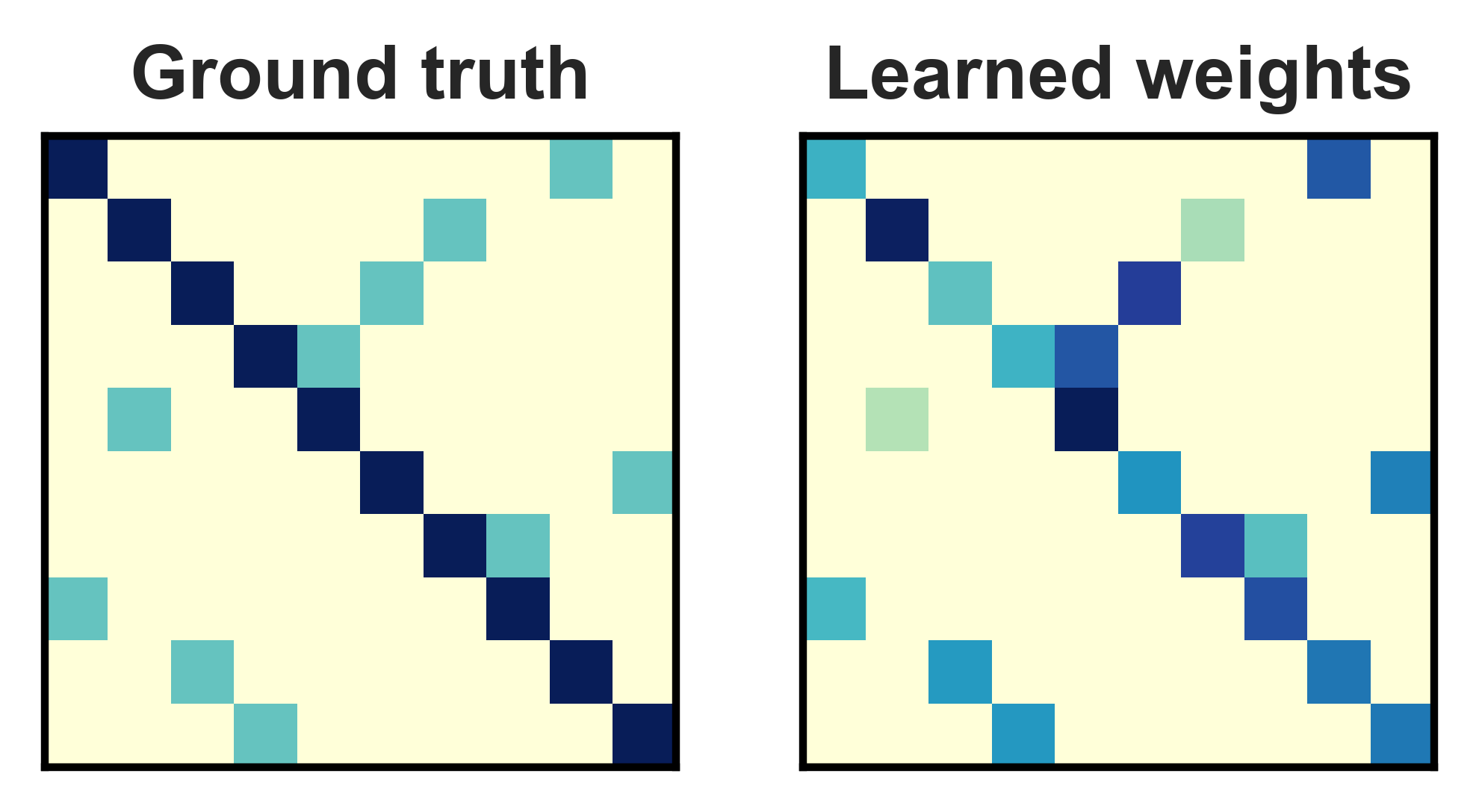}
  \caption{annotator 1}
\end{subfigure}
\begin{subfigure}{0.24\textwidth}
  \centering
  \includegraphics[width=4cm]{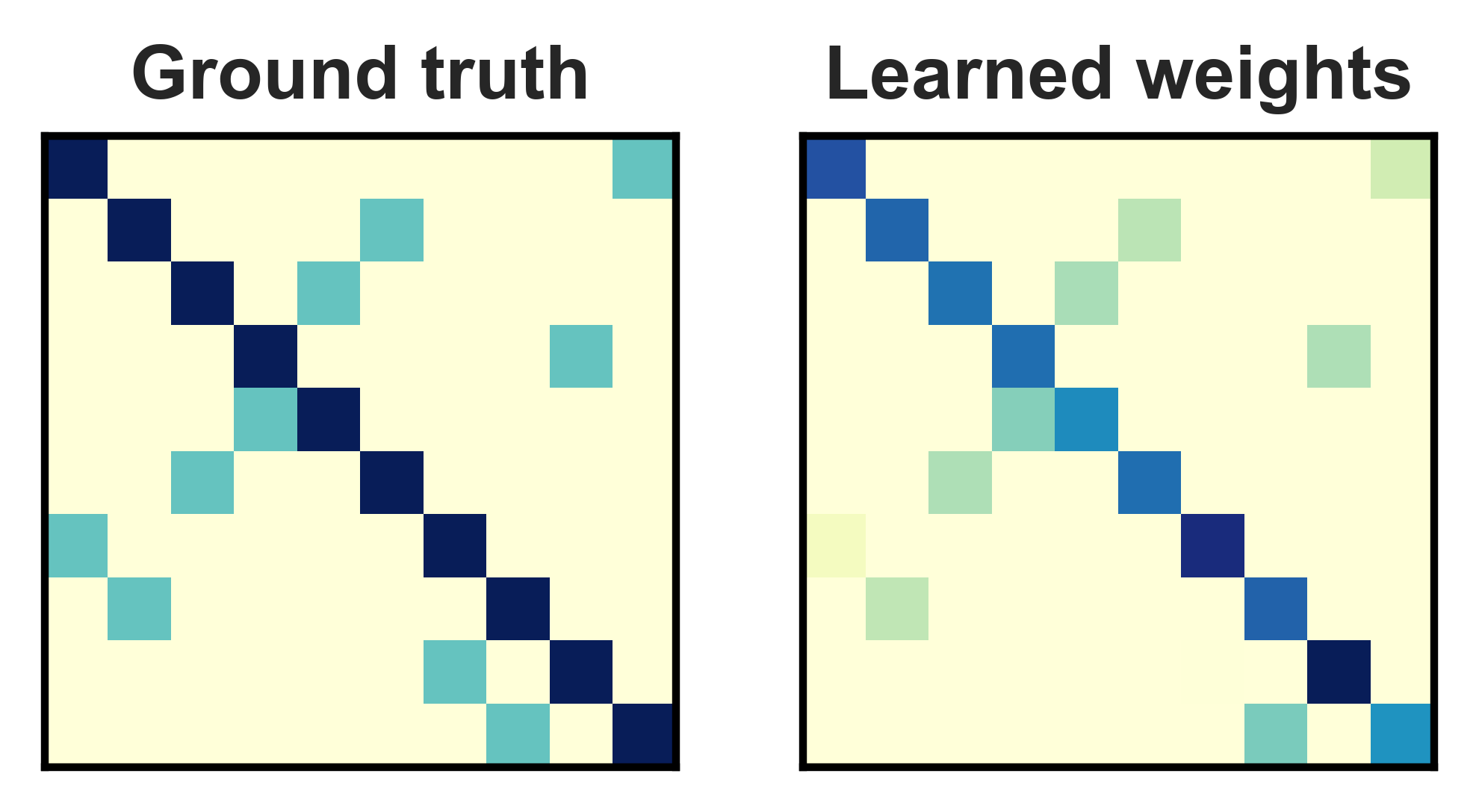}
  \caption{annotator 2}
\end{subfigure}
\begin{subfigure}{0.24\textwidth}
  \centering
  \includegraphics[width=4cm]{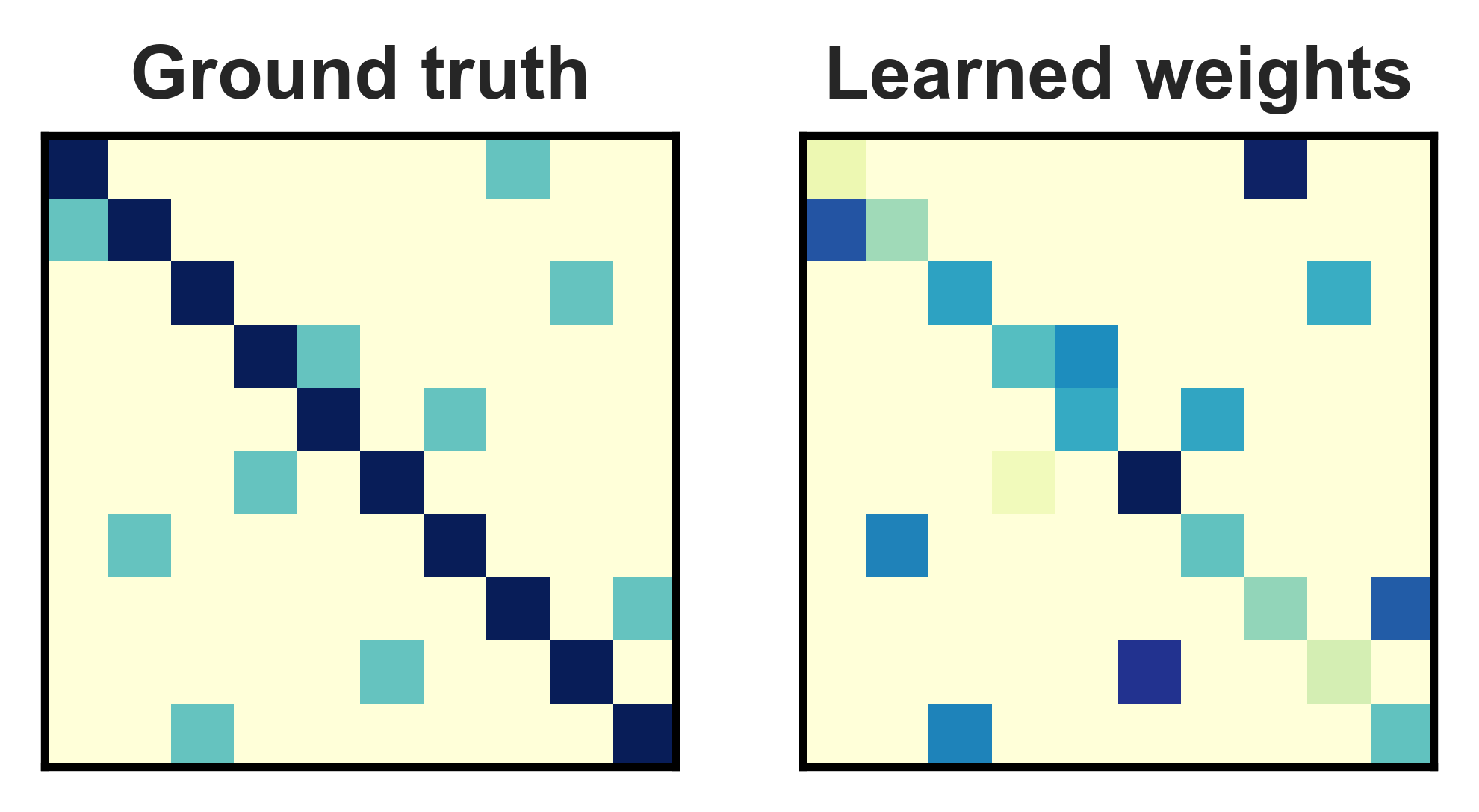}
  \caption{annotator 3}
\end{subfigure}

\begin{subfigure}{0.24\textwidth}
  \centering
  \includegraphics[width=4cm]{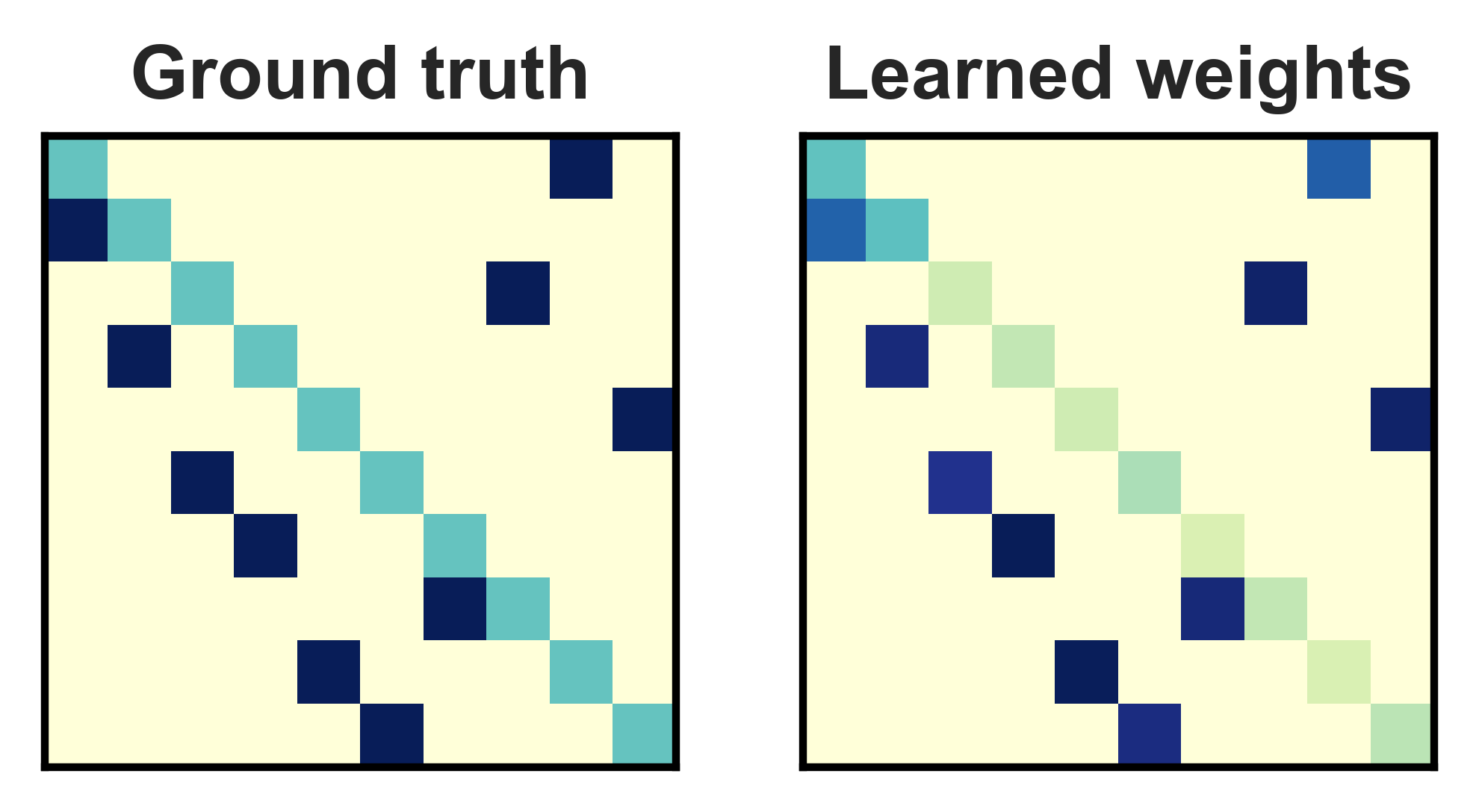}
  \caption{common noise}
\end{subfigure}
\begin{subfigure}{0.24\textwidth}
  \centering
  \includegraphics[width=4cm]{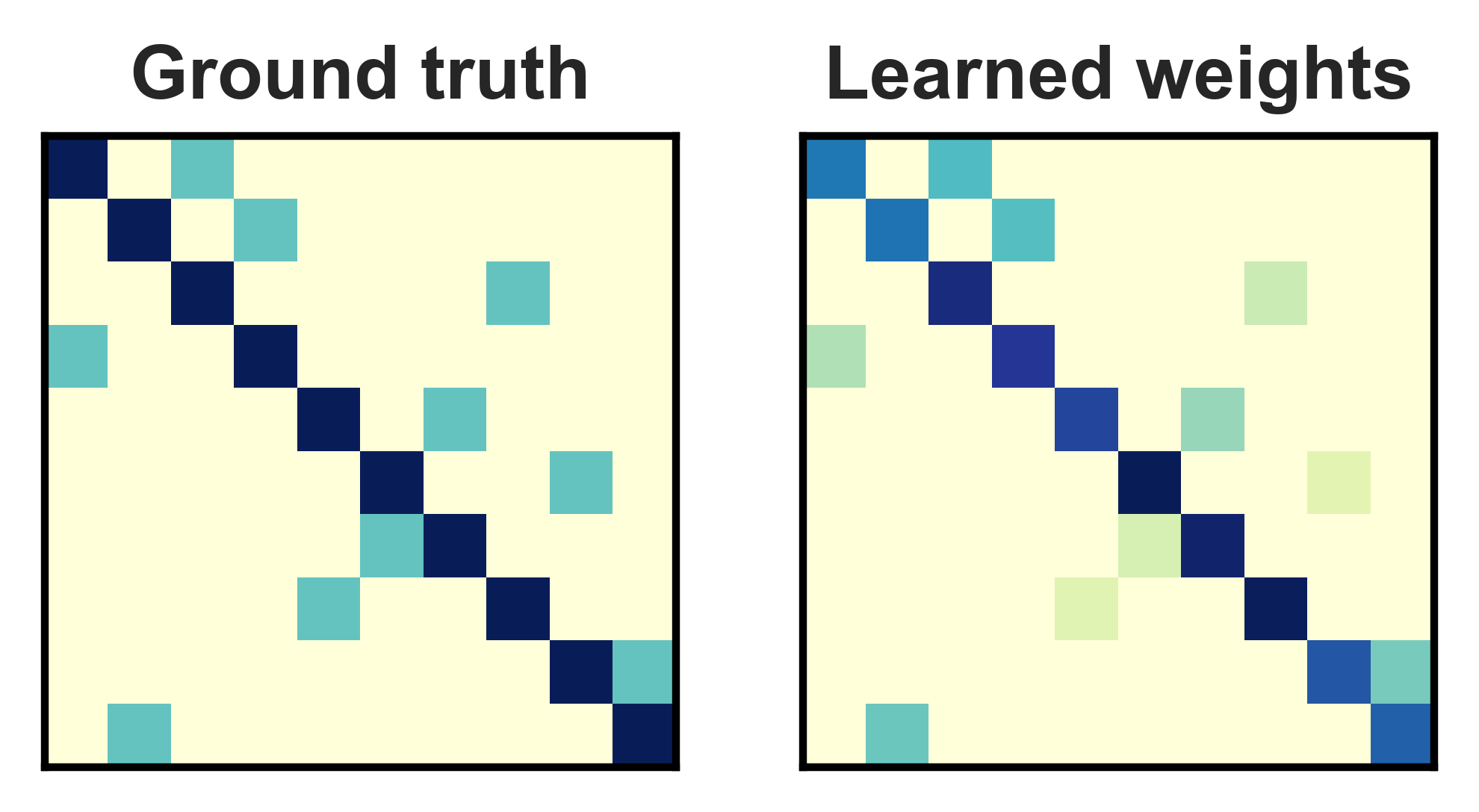}
  \caption{annotator 1}
\end{subfigure}
\begin{subfigure}{0.24\textwidth}
  \centering
  \includegraphics[width=4cm]{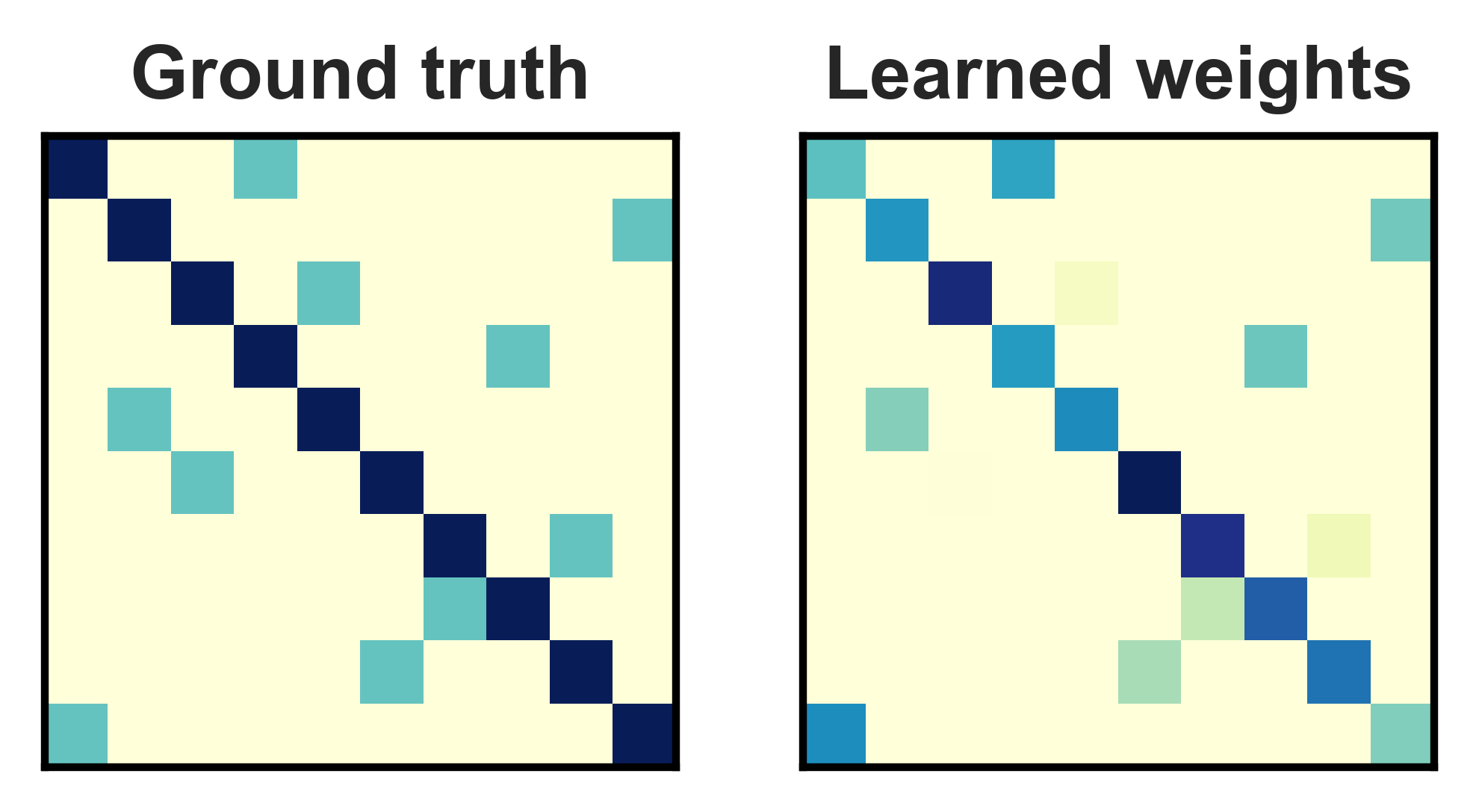}
  \caption{annotator 2}
\end{subfigure}
\begin{subfigure}{0.24\textwidth}
  \centering
  \includegraphics[width=4cm]{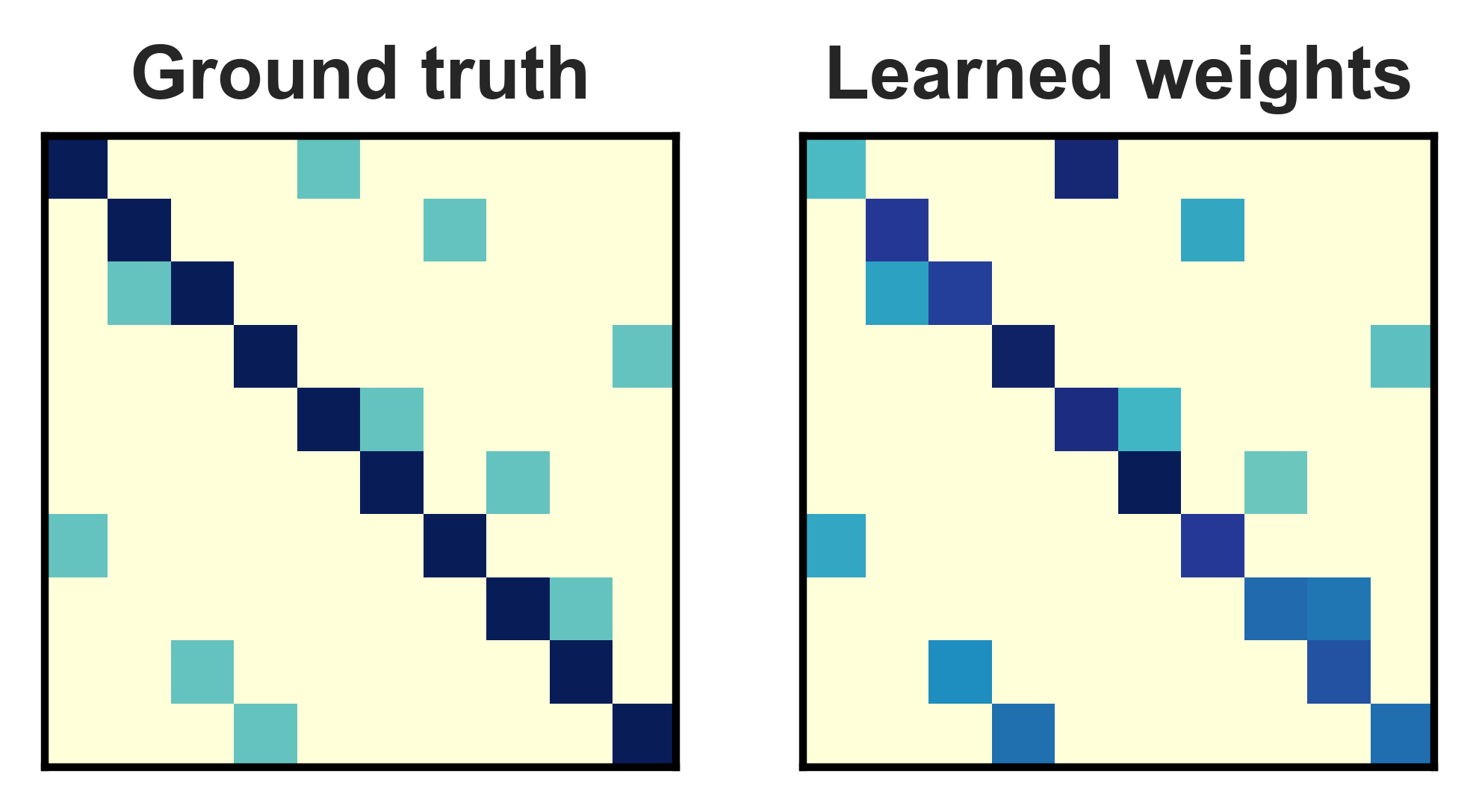}
  \caption{annotator 3}
\end{subfigure}
\caption{Comparison between ground truth confusion matrices and learned ones on CIFAR-10 dataset. The top row is the result of asymmetric common noise. The bottom row is the result of symmetric common noise.}
\label{fig:mat_recovery}
\end{figure*}

\noindent\textbf{Synthesizing annotations}. 
We consider two representative noise patterns in common noise: (1) \textit{Asymmetric confusion}. Every class is mapped to another uniformly chosen class on both datasets. (2) \textit{Symmetric confusion}. On Synthetic dataset, two random classes are paired and flipped into each other. And on CIFAR-10 dataset, we manually paired similar classes (e.g., \emph{bird} and \emph{airplane}) to be flipped with each other. For individual confusion matrices, we use asymmetric confusion. We generate one global confusion matrix, and one individual confusion matrix for every annotator. In our experiments, the range of common noise strength is set to $[0.4, 0.8]$, while the individual noise strength of annotators is fixed to 0.7. In both noise generation patterns, the noise strength is evenly distributed among the chosen off-diagonal entries.

To control the source of noise in each annotation, i.e., $s_{i}^r$, we randomly generate a set of annotator features $\boldsymbol{u}$, which are not disclosed to the learners. Given instance feature vector $\boldsymbol{v}_i$ and annotator feature vector $\boldsymbol{u}_r$, we compute $\omega_{i}^r$ by Eq \eqref{eq:proportion} with the ground-truth weight matrices $(\boldsymbol{W}_u, b_u)$ and $(\boldsymbol{W}_v, b_v)$. 
These weight matrices are not disclosed to the learner. The bias terms are used to control the average proportion of common noise across annotations into a range of $[0.3, 0.7]$. When we generate annotation $y^r_i$ for instance $i$ by annotator $r$, we first sample $s^r_i\sim B(\omega_{i}^r)$. If $s^r_i=1$, the common confusion matrix $\pi^g$ will be used; otherwise, individual confusion matrix $\pi^r$ will be used. Then we sample $y^r_i$ from the chosen confusion matrix based on the true label $z_i$ of this instance. We also include a special case that the proportion is 0, where there is no common confusion.

In our experiments, when studying the influence of common noise strength on the learnt classifier, the average proportion of common noise is controlled to be around 0.5. When studying the influence of the proportion of common noise in each annotation, the common and individual noise strength is controlled to 0.4 and 0.7 respectively. 


\noindent\textbf{Backbone networks \& training details}. On the Synthetic dataset, we apply a simple network with only one fully connected (FC) layer (with 128 units and ReLU activations), along with a softmax output layer, using 50\% dropout. On the CIFAR-10 dataset, we follow the setting of \citet{cao2019max} to use VGG-16 as the backbone network. We trained the network using the Adam optimizer \cite{kingma2014adam} with default parameters and learning rate 
searched from \{0.02, 0.01, 0.005\}. The dimension of annotator and instance embedding is chosen from \{20, 40, 60, 80\}. The regularization term $\lambda$ is searched from $\{10^{-4}, 10^{-5}, 10^{-6}\}$. All experiments are repeated 5 times with different random seeds. Model selection is achieved by choosing the model with the highest accuracy on the validation set. We report mean and standard deviation of test accuracy on the five runs.  To make the comparisons fair, all the evaluated methods used the same backbone networks. We implement our framework with PyTorch, and run it on a CentOS system with one NVIDIA 2080Ti GPU with 10 GB memory. 

\noindent\textbf{Results}. We report the results on the CIFAR-10 dataset in Figure \ref{fig:cifar_results}, where our solution demonstrated consistent improvement against all baselines across all settings. The observation on the Synthetic dataset is similar, and we present the results in Appendix C due to space limit. All the baselines assumed single source of noise, i.e., annotator-specific noise; as a result, they are heavily influenced when noise become complicated, e.g., a large proportion of mistakes from common confusion and the strength of common noise is strong. Our solution is less sensitive to the environment by decomposing and separately modeling the confusion. 
When there is no common confusion, the empirical result shows no significant difference between our solution and baselines in this extreme setting, which should also be expected. But we argue that this extreme setting rarely holds in reality, as annotators always share some commonsense about the world.

\begin{table*}[!htp]
\centering
\begin{tabular}{c @{\hspace{0.4\tabcolsep}} |c @{\hspace{0.4\tabcolsep}} c @{\hspace{0.4\tabcolsep}}c @{\hspace{0.4\tabcolsep}}c @{\hspace{0.4\tabcolsep}}c @{\hspace{0.4\tabcolsep}}|c @{\hspace{0.4\tabcolsep}}c @{\hspace{0.4\tabcolsep}} c @{\hspace{0.4\tabcolsep}}|c}
\toprule
        & DL-MV & DL-CL & Doctor Net & Anno-Reg & Max-MIG &DL-GLAD & DL-WC & AggNet & CoNAL \\ \midrule
LabelMe & 79.83{\footnotesize $\pm 0.34$} & 83.27{\footnotesize $\pm 0.52$}    & 82.12{\footnotesize $\pm 0.43$} & 82.77{\footnotesize $\pm 0.48$}  & 85.33{\footnotesize $\pm 0.61$}  & 83.12{\footnotesize $\pm 0.34$} & 82.74{\footnotesize $\pm 0.33$} & 84.75{\footnotesize $\pm 0.27$}& \textbf{87.12}{\footnotesize $\pm 0.55$}\\ \midrule
Music   & 72.53{\footnotesize $\pm 0.41$}  & 81.46{\footnotesize $\pm 0.53$}      & 76.58{\footnotesize $\pm 0.47$}       & 79.12{\footnotesize $\pm 0.36$} & 81.37{\footnotesize $\pm 0.33$} & 77.82{\footnotesize $\pm 0.37$} & 75.76{\footnotesize $\pm 0.24$} & 81.92{\footnotesize $\pm 0.41$} & \textbf{84.06}{\footnotesize $\pm 0.42$} \\ \bottomrule
\end{tabular}
\caption{Test accuracy on two real-world crowdsourcing datasets.}
\label{table:real}
\end{table*}

All models are influenced by symmetric common noise, which directly makes the swapped classes similar. Based on the lower bound provided in Theorem \ref{trm1LB}, similar conditional class distributions in the confusion matrices will make the problem more difficult, so that the degeneration of all methods are expected under symmetric confusion. In the most extreme case where the proportion of common noise is set to 0.7 and the common noise strength is set to 0.6, nearly 42\% annotations are pairwise flipped. However, our method can still outperform baselines with a large margin. Mix-MIG is believed to be robust to correlated mistakes if high-quality annotator exists. However, our experiments show that common confusion poisoned the classifier obtained in Max-MIG even though every annotator is of high quality (individual noise strength is set to 0.7). DL-CL and Anno-Reg failed because they could not differentiate the source of noise, such that the gradients from the modeled annotations cannot be properly adjusted to update the classifier. Both Doctor Net and DL-MV are based on majority vote, so that they fail when the annotations across annotators are no longer independent, i.e., caused by the common confusion. Compared to methods with complex noise models, DL-GLAD directly models the annotation accuracy, which is not suitable for class-dependent confusion. DL-WC clusters correlated annotators to share confusion matrix, which can reduce the influence of common confusion. But the expertise of each annotator is missing, which leads to its bad performance. AggNet shows the advantage of directly learning from annotations rather than from aggregated labels. But it still assumes the only noise source thus cannot handle common noise well.

To understand how accurate our solution can distinguish common and individual noise, we report the learnt weights of noise adaptation layers against the ground-truth confusion matrices on the CIFAR-10 dataset in Figure \ref{fig:mat_recovery}. In this experiment, we set the common noise strength to 0.7 and the proportion of common noise to 0.5. We can find that in most cases the ground-truth common noise pattern is well recovered, especially under the asymmetric noise pattern. 



\subsection{Experiments on Real-world Datasets}

\noindent\textbf{Datasets description}. We consider two real-world datasets. \textbf{LabelMe} \cite{rodrigues2018deep, russell2008labelme} is an image classification dataset, consists of 2,688 images from 8 classes, where 1,000 of them are labeled by annotators from Amazon Mechanical Turk (AMT)\footnote{https://www.mturk.com/} and the remainings are used for validation and testing. Each image is labeled by an average of 2.5 annotators, with a mean accuracy of 69.2\%. Standard data augmentation techniques are used on training data, including horizontal flips, rescaling and shearing, to enrich the training set to 10,000 images. \textbf{Music} \cite{rodrigues2014gaussian} is a music genre classification dataset, consisting of 1,000 samples of songs with 30 seconds length from 10 music genres, where 700 of them are labeled by AMT annotators and the rest are used for testing. 
Each sample is labeled by an average of 4.2 annotators, with a mean annotation accuracy of 73.2\%.

\noindent\textbf{Backbone networks \& training details}. For LabelMe dataset, we followed the setting of \citet{rodrigues2018deep}: we apply a pre-trained VGG-16 network followed by a FC layer with 128 units and ReLU activations, and a softmax output layer, using 50\% dropout. For Music dataset, we use the same FC layer and softmax layer as LabelMe. Batch normalization \cite{ioffe2015batch} is performed in each layer. Other hyper-parameters are the same as the synthesized experiments.

\noindent\textbf{Results}. As reported in Table \ref{table:real}, CoNAL achieved new state-of-the-art performance on both real-world datasets. In particular, we looked into the accuracy on classes where commonly made mistakes across annotators are observed (see Figure \ref{fig:noise_anlysis}). For example, for \emph{open country} on LabelMe, its accuracy in CoNAL is 67.21\%, while the best baseline Max-MIG only achieved 54.19\%. The good performance aligns with our analysis in Theorem \ref{trm1LB}, by differentiating common and individual confusions, it is easier to find the true labels. We provide the visualization of the learned confusion matrices and the training and testing accuracy plots on real-world datasets in Appendix C. 

\noindent\textbf{Influence of the regularization term $\lambda$}. We studied the influence of different $\lambda$ in Table \ref{table:lambda}. The results show by enforcing the noise adaptation layers to be different, the performance is improved on both datasets. The value of $\lambda$ also matters, and $10^{-5}$ achieves best performance empirically.

\begin{table}[!htp]
\centering
\begin{tabular}{c @{\hspace{0.4\tabcolsep}} c @{\hspace{0.4\tabcolsep}} c @{\hspace{0.4\tabcolsep}} c @{\hspace{0.4\tabcolsep}}c @{\hspace{0.4\tabcolsep}}}
\toprule
 $\lambda$ & 0 & $10^{-4}$ & $10^{-5}$  & $10^{-6}$  \\ \midrule
LabelMe & 85.68{\footnotesize $\pm 0.38$} & 86.61{\footnotesize $\pm 0.41$}   & 87.12{\footnotesize $\pm 0.55$}  & 86.26{\footnotesize $\pm 0.47$}  \\ \midrule
Music   & 82.14{\footnotesize $\pm 0.31$}  & 83.52{\footnotesize $\pm 0.25$}  & 84.06{\footnotesize $\pm 0.42$}  & 82.98{\footnotesize $\pm 0.37$}  \\ \bottomrule
\end{tabular}
\caption{Model performance under different $\lambda$.}
\label{table:lambda}
\end{table}

\section{Conclusion \& Future works} In this paper, we study the problem of learning from crowds with noisy annotations. Aside from the widely employed independent noise assumptions across annotators, we decompose annotation noise into common and individual confusions. We used neural networks to realize our probabilistic modeling of crowdsourced data, and estimate each component in our solution in an end-to-end fashion. Extensive empirical evaluations confirm the advantage of our solution in learning from complicated real-world crowdsourced data. Our solution is also flexible: it can be easily applied to any existing neural classifiers by simply connecting with the proposed noise adaptation layers.   In our current solution, all annotators share the same global confusion matrix. An interesting extension is to consider group-wise confusion, where we keep a shared confusion matrix for each annotator group, and identify the groups by optimization. It is also worthwhile to extend the solution to a proactive setting, e.g., probe annotators for more annotations so as to improve common confusion modeling.

\section*{Acknowledgments}
We thank our anonymous reviewers for their helpful comments. This work was supported by NSF 1718216, 1553568, and Department of Energy DE-EE0008227.

\section*{Ethics statement}
Our study focuses on tackling an urgent problem in this deep learning era: learning from crowds. High-quality labels are needed for real-world deep learning applications; however, they are typically difficult and expensive to collect in practice. Hence, we propose to directly learn from labels given by non-expert annotators, considering both common mistakes and individualized mistakes. On the one hand, industrial applications will benefit from this work since non-expert labels are both cost- and time-effective to enable deployment of deep learning systems. On the other hand, our work also has academic impact. Our method can be applied to new research problems where high-quality labeled data is rare but crowdsourced labels are easy to obtain, such as medical image classification. 

The potential issue of common noise modeling is it might open the door for adversarial annotators. When previously modeled independently, they need to provide a large number of annotations to  poison a learner. But if an attacker gets access to common noise, he/she only needs to provide a few annotations consistent with the common noise to amplify the influence of common noise. This will also make other ordinary annotators inadvertently contribute to the attack. Another potential issue of learning from crowds is when modeling annotator expertise, we are learning an annotator profile, which has risk in disclosing their privacy, especially in privacy sensitive annotation problems. Data masking or distortion (e.g., differential privacy) is needed to protect annotators’ privacy.
\bibliography{reference}
\appendix
\clearpage
\section{A. Proof of Theorem 1}
\label{app:proof}

\begin{proof} In our setting, the ground-truth class distribution $\rho_i$ depends on the instance features. Then the minimax error rate of the crowdsourcing problem can be lower bounded by the following,
\begin{equation}
    \begin{aligned}
    \textup{inf}_{\hat{Z}} \textup{sup}_{Z\in [C]^N} \mathbb{E}\left[\mathcal{L}(\hat{Z}, Z)\right] \geq & \frac{1}{N^2\textup{log} C}\sum_{i=1}^NR(\rho_i, \Pi')  \\
    &- \frac{\textup{log} 2}{N^2\textup{log} C} 
    \label{eq:lb_ds}
    \end{aligned}
\end{equation}
where
\begin{equation}
    \label{eq:r}
    R(\rho_i, \Pi') = H(\rho_i) - \sum_{r=1}^{R}\sum_{c=1}^C\sum_{c'=1}^C\rho_{ic}\rho_{ic'}\textup{KL}(\pi'^r_{c*} \parallel \pi'^r_{c'*})
\end{equation}
and $\Pi' = \{\pi'^r\}_{r=1}^R$ denotes the set of annotator-level confusion matrices. We use $\pi'$ to differentiate with our defined individual confusion matrix in the main paper. The proof of Eq \eqref{eq:lb_ds} is similar to \cite{imamura2018analysis}. Based on our new noise generation assumption, the annotation noise can be decomposed by common noise and individual noise. Thus we can further bound the minimax error rate under this noise assumption.

Under our new noise assumption, we can evaluate the confusion matrix on a per-instance-annotator basis. Specifically, in each annotation, the effective confusion matrix is a weighted combination of the global and individual confusion matrices, where the weight is $w^r_i$. In a mixture model, the Kullback–Leibler divergence can be decomposed accordingly by,

\begin{align}
        \textup{KL}(\pi'^r_{c*} \parallel \pi'^r_{c'*}) = & \; \textup{KL}(\omega_i^r \pi^g_{c*} + (1-\omega_i^r)\pi^r_{c*} \parallel  \nonumber \\
        & \quad\quad\quad\quad\quad\quad \omega_i^r \pi^g_{c'*} + (1-\omega_i^r)\pi^r_{c'*}) \nonumber \\
    \leq & \; \textup{KL}(\boldsymbol{\omega}_i^r \parallel \boldsymbol{\omega}_i^r )+ \omega_i^r\, \textup{KL}(\pi^g_{c*} \parallel \pi^g_{c'*}) \nonumber \\
    & + (1-\omega_i^r) \,\textup{KL}(\pi^r_{c*}\parallel \pi^r_{c'*}) \label{eq:log_sum} \\
    =&\; \omega_i^r\, \textup{KL}(\pi^g_{c*} \parallel \pi^g_{c'*}) \nonumber \\
     &+ (1-\omega_i^r) \,\textup{KL}(\pi^r_{c*}\parallel \pi^r_{c'*})\label{eq:ineq}
\end{align}

where $\boldsymbol{\omega}_i^r = (\omega_i^r, 1-\omega_i^r)$. The inequality can be derived by the log-sum inequality. Substitute Eq \eqref{eq:ineq} back to Eq \eqref{eq:r}, we can get the new term $F(\rho, \Pi, \Omega)$ in Theorem 1. Plug it back into Eq \eqref{eq:lb_ds}, we can get the refined result in our Theorem 1,

\begin{equation*}
\begin{aligned}
    \textup{inf}_{\hat{Z}} \textup{sup}_{Z\in [C]^N} \mathbb{E}\left[\mathcal{L}(\hat{Z}, Z)\right] \geq& \frac{1}{N^2\textup{log}\,C}\sum_{i=1}^NF(\rho_i, \Pi, \Omega) \\
    &- \frac{\textup{log}\,2}{N^2\textup{log}\,C}
\end{aligned}
\end{equation*}
\end{proof}

\begin{corollary}
When $N \geq \frac{2\textup{log}2}{\textup{max}F(\rho_i, \Pi, \Omega)}$, increasing the number of instances $N$ will decrease the error rate bound.
\end{corollary}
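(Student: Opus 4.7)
The plan is to analyze the lower bound from Theorem~1 as an explicit function of $N$ and show that it is eventually monotonically decreasing. Write
\[
L(N) \;=\; \frac{1}{N^2\log C}\sum_{i=1}^{N} F(\rho_i,\Pi,\Omega) \;-\; \frac{\log 2}{N^2\log C},
\]
and set $F^{\ast} := \max F(\rho_i,\Pi,\Omega)$. Since each summand satisfies $F(\rho_i,\Pi,\Omega)\leq F^{\ast}$, the natural object to study is the worst-case envelope obtained by replacing every per-instance term by its maximum:
\[
\tilde L(N) \;=\; \frac{N F^{\ast} - \log 2}{N^2\log C} \;=\; \frac{F^{\ast}}{N\log C} \;-\; \frac{\log 2}{N^2\log C}.
\]
The corollary then reduces to showing that $\tilde L$ is non-increasing past the stated threshold, since adding a new instance contributes at most $F^{\ast}$ to the numerator while the denominator always grows like $N^{2}$, so monotonicity of $\tilde L$ transfers back to $L$.

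The next step is a one-line calculus argument. Treating $N$ as a continuous variable,
\[
\frac{d\tilde L}{dN} \;=\; -\,\frac{F^{\ast}}{N^2\log C} \;+\; \frac{2\log 2}{N^3\log C} \;=\; \frac{2\log 2 \;-\; N F^{\ast}}{N^3\log C}.
\]
This is non-positive precisely when $N F^{\ast}\geq 2\log 2$, i.e., when $N\geq 2\log 2/F^{\ast}$, which is exactly the threshold in the statement. Plugging back the definition $F^{\ast}=\max F(\rho_i,\Pi,\Omega)$ yields the claim.

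The only real delicacy is that $N$ is integer-valued, so one ought strictly to verify $\tilde L(N+1)-\tilde L(N)\leq 0$ rather than appealing to a derivative. Placing this difference over the common denominator $N^{2}(N+1)^{2}\log C$, the numerator collapses to $-F^{\ast}N(N+1)+\log 2\,(2N+1)$, and a direct comparison of the quadratic term $F^{\ast}N(N+1)$ against the linear perturbation $\log 2\,(2N+1)$ confirms non-positivity for $N\geq 2\log 2/F^{\ast}$. I expect this bookkeeping to be the main (minor) obstacle, but the discrete and continuous analyses agree up to lower-order terms, so the continuous derivative is the cleanest way to present the proof while the finite-difference check supplies full rigor.
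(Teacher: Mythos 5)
Your proof is correct and follows essentially the same route as the paper's: replace each summand $F(\rho_i,\Pi,\Omega)$ by its maximum to obtain the envelope $\frac{\max F}{N\log C}-\frac{\log 2}{N^2\log C}$, then show its derivative in $N$ is non-positive exactly when $N\geq 2\log 2/\max F(\rho_i,\Pi,\Omega)$. Your additional finite-difference check for integer $N$ is a small rigor bonus the paper omits, but it does not change the argument.
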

\begin{proof}
\begin{align*}
    &\frac{1}{N^2\textup{log}\,C}\sum_{i=1}^NF(\rho_i, \Pi, \Omega) - \frac{\textup{log}\,2}{N^2\textup{log}\,C} \\
    \leq& \frac{\textup{max}F(\rho_i, \Pi, \Omega)}{N\textup{log}\,C} - \frac{\textup{log}\,2}{N^2\textup{log}\,C},
\end{align*}

When the gradient of the upper bound is less than 0, the upper bound will decrease when $N$ is growing. This can be achieved by setting $N$ by the following,

\begin{gather*}
    -\frac{1}{N^2}\frac{\textup{max}F(\rho_i, \Pi, \Omega)}{\textup{log}\,C} + \frac{2\textup{log}2}{N^3\textup{log}\,C} \leq 0 \\
    \Rightarrow N \geq \frac{2\textup{log}2}{\textup{max}F(\rho_i, \Pi, \Omega)}
\end{gather*}
\end{proof}
\noindent\textbf{Remarks}. The corollary shows when the number of instances is growing, the label aggregation quality gets improved. Also, we need to point out the structure of confusion matrices $\Pi$ is more important than the number of classes $C$ in this lower bound. With a larger KL distance between every pair of rows in $\Pi$, we can expect an improved error lower bound.

 \begin{figure*}[!htp]
\centering
\begin{subfigure}{1\textwidth}
  \centering
  \includegraphics[width=15.1cm]{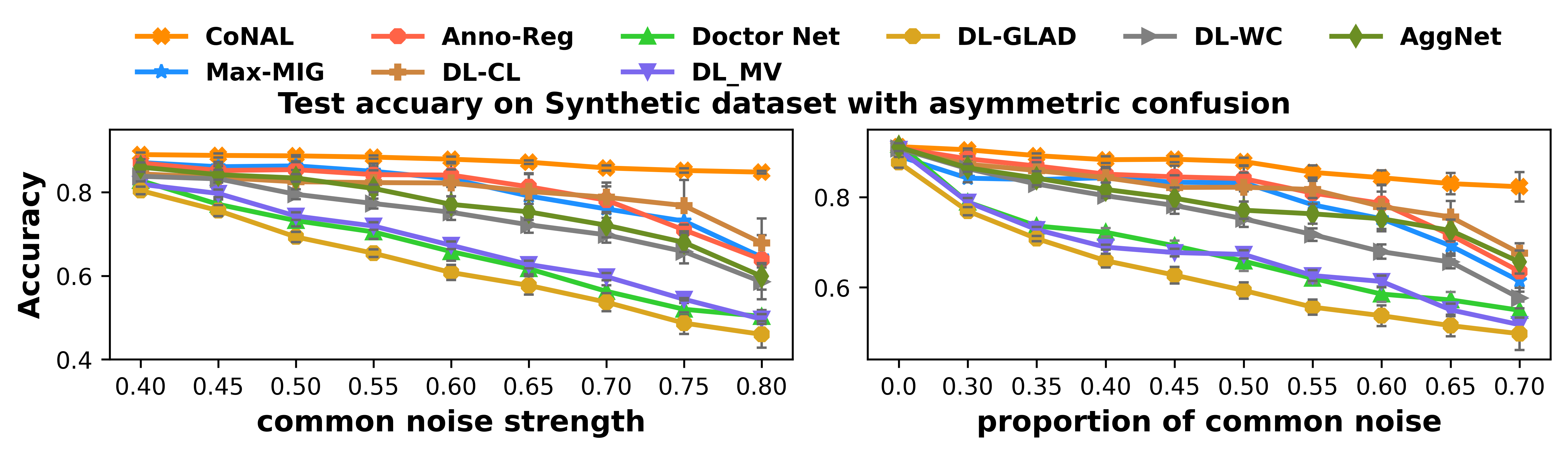}
  \label{fig:sub1}
\end{subfigure}

\begin{subfigure}{1\textwidth}
  \centering
  \includegraphics[width=15.1cm]{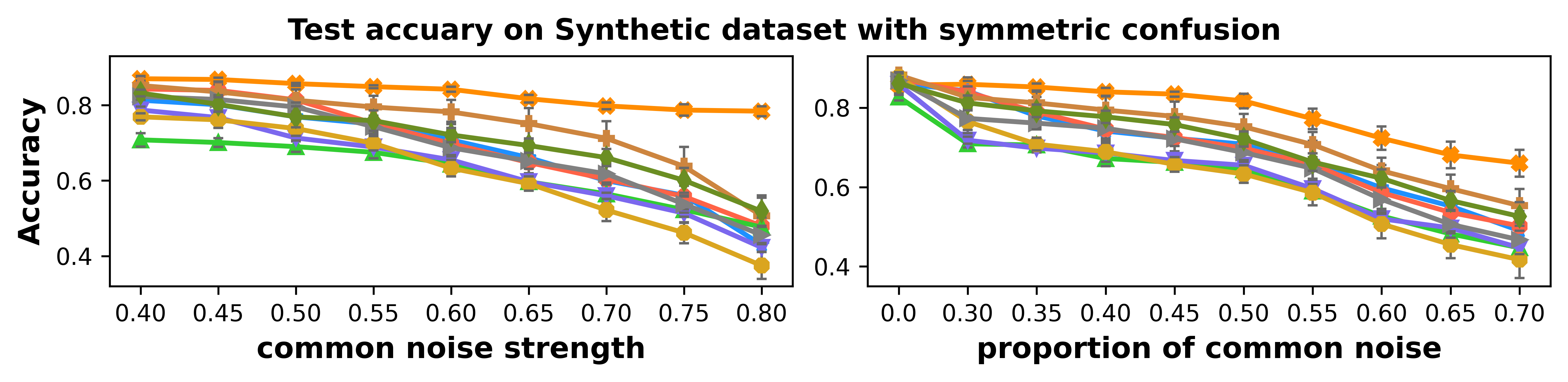}
  \label{fig:sub2}
\end{subfigure}
\caption{Results on Synthetic dataset.}
\label{fig:syn_results}
\vspace{-1.2em}
\end{figure*}

\section{B. EM algorithm for learning from crowds by modeling common confusions}
\label{app:em}
The EM algorithm is a generic solution for aggregating crowdsourced labels in classic crowdsourcing problems \cite{dawid1979maximum, imamura2018analysis}, and it can also be used under our common confusion assumption. Though we have pointed out the main drawbacks of EM-based algorithms in our solution framework, we still list the procedures of using EM algorithm in our problem for interested readers. In particular, we demonstrate a two-step solution, where the latent indicator $s_i^r$ is drawn from a Bernoulli distribution directly parameterized by $\omega_i^r$ and the ground-truth label $z_i$ is drawn from a multinomial distribution $p_{\boldsymbol{\theta}}(z_i|\textbf{x}_i)$ parameterized by $\boldsymbol{\theta}$, which is essentially the soft-classifier we are estimating from the crowdsourced data. Once the ground-truth labels $\{z_i\}^N_{i=1}$ on instances are inferred, we estimate the parameters $\boldsymbol{\theta}$ in $p_{\boldsymbol{\theta}}(z_i|\textbf{x}_i)$ by treating the inferred labels as ground-truth. When the instance features are unavailable, we can use another multinomial distribution $p(z_i|\rho)$ to replace $p_{\boldsymbol{\theta}}(z_i|\textbf{x}_i)$, where $\rho = \{\rho_c\}_{c=1}^C$ is the corresponding Dirichelet prior, to perform answer aggregation by EM as well.

Under our common confusion assumption, the conditional probability $p(y_i^r|z_i)$ can be written as

\begin{equation*}
    p(y_i^r|z_i; \Pi, \omega^r_i) = \sum_{s^r_i=\{0,1\}}p(s^r_i|w^r_i)p(y_i^r|z_i,s^r_i, \pi^r)
\end{equation*}

Based on this conditional probability, we derive the EM procedure to infer the ground-truth labels as follows. In the E-step, we estimate hidden ground-truth label $z_i$ and latent indicator $s_i^r$. The posterior $q(z_i)$ and $q(s_i^r)$ are obtained using Bayes' rule,

\begin{gather}
     q(z_i=c) \propto p_{\boldsymbol{\theta}_0}(z_i=c|\textbf{x}_i)\prod_{r=1}^Rp(y_i^r|z_i=c; \Pi_0, \omega_{i0}^r), \nonumber \\
     q(s_i^r=1) \propto \omega^r_{i0}\,p(y_i^r|z_i, \pi^g_0), \nonumber\\
     q(s_i^r=0) \propto (1-\omega^r_{i0})\,p(y_i^r|z_i, \pi^r_0). \nonumber
\end{gather}

\noindent where $\boldsymbol{\theta}_0, \omega_0$ and $\Pi_0$ are the current estimated parameters. In the M-step, we update the parameters of neural network $\boldsymbol{\theta}$, proportion of common noise $\omega$ and confusion matrices $\Pi$. The proportion of common noise and confusion matrices have closed-form solutions by using the Lagrange multiplier method \cite{bertsekas2014constrained}, 

\begin{gather*}
    \omega^r_i = q(s_i^r=1) \\
    \pi^g_{c, l} =\frac{\sum_{i=1}^N\sum_{r=1}^Rq(z_i=c)q(s_i^r=1)\mathbb{I}(y^r_i=l)}{\sum_{i=1}^N\sum_{r=1}^Rq(z_i=c)q(s_i^r=1)}, \\
    \pi^r_{c, l} = \frac{\sum_{i=1}^Nq(z_i=c)q(s_i^r=0)\mathbb{I}(y^r_i=l)}{\sum_{i=1}^Nq(z_i=c)q(s_i^r=0)}
\end{gather*}

To update the neural network parameter $\boldsymbol{\theta}$, we follow the approach in \cite{goldberger2016training, albarqouni2016aggnet} and use the inferred posterior of ground-truth $q(z_i)$ as the target. Specifically, we compute the cross-entropy loss and backpropagate the error using stochastic gradient optimization techniques such as Adam \cite{kingma2014adam}. For the generic setting where instance features are unavailable, we can update the class distribution $\rho$ using its closed-form solution, 
 
 \begin{equation*}
     \rho_c = \frac{1}{N}\sum_{i=1}^Nq(z_i=c)
 \end{equation*}

\begin{figure*}[!htp]
\centering
\begin{subfigure}{0.24\textwidth}
  \centering
  \includegraphics[width=4cm]{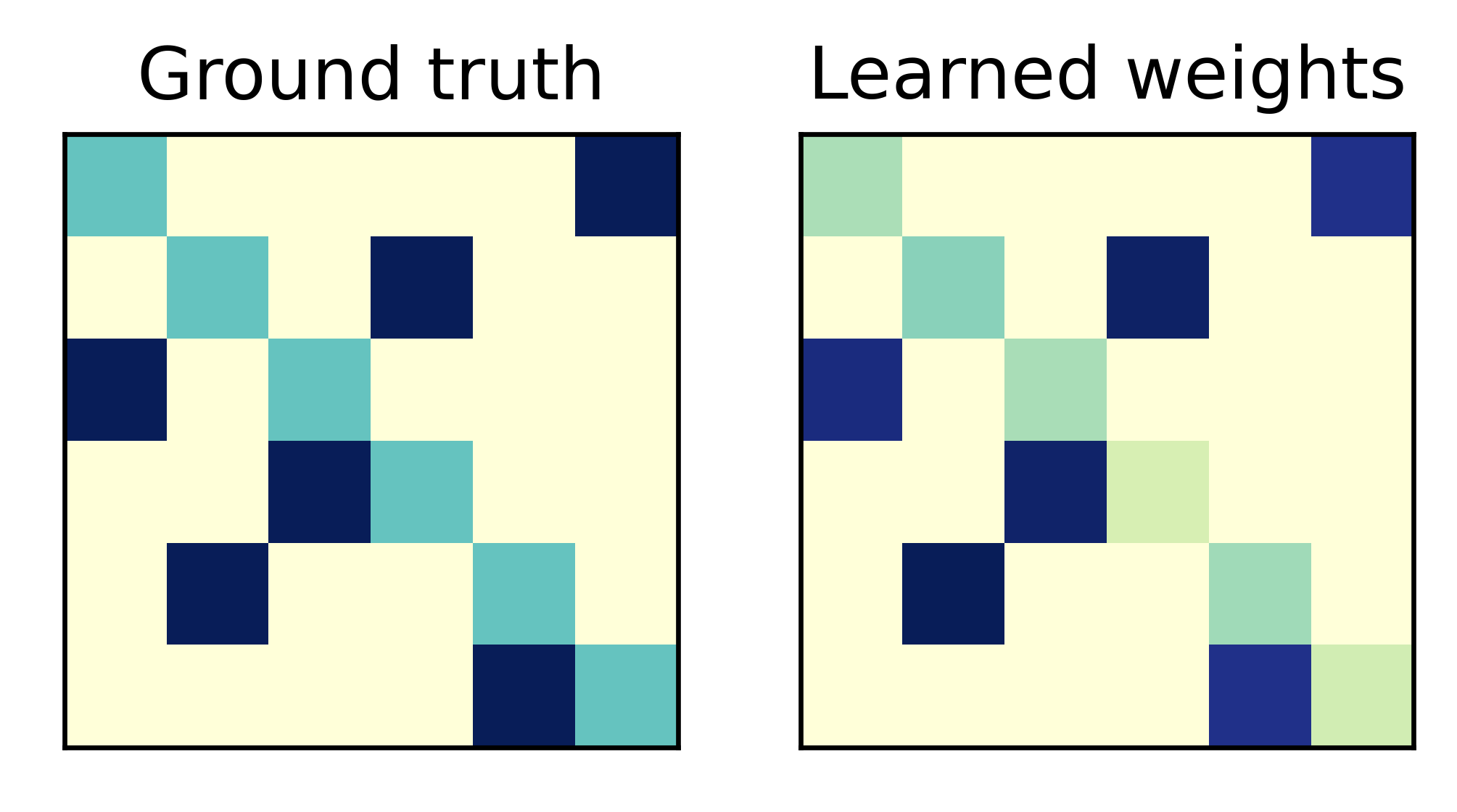}
  \caption{common noise}
\end{subfigure}
\begin{subfigure}{0.24\textwidth}
  \centering
  \includegraphics[width=4cm]{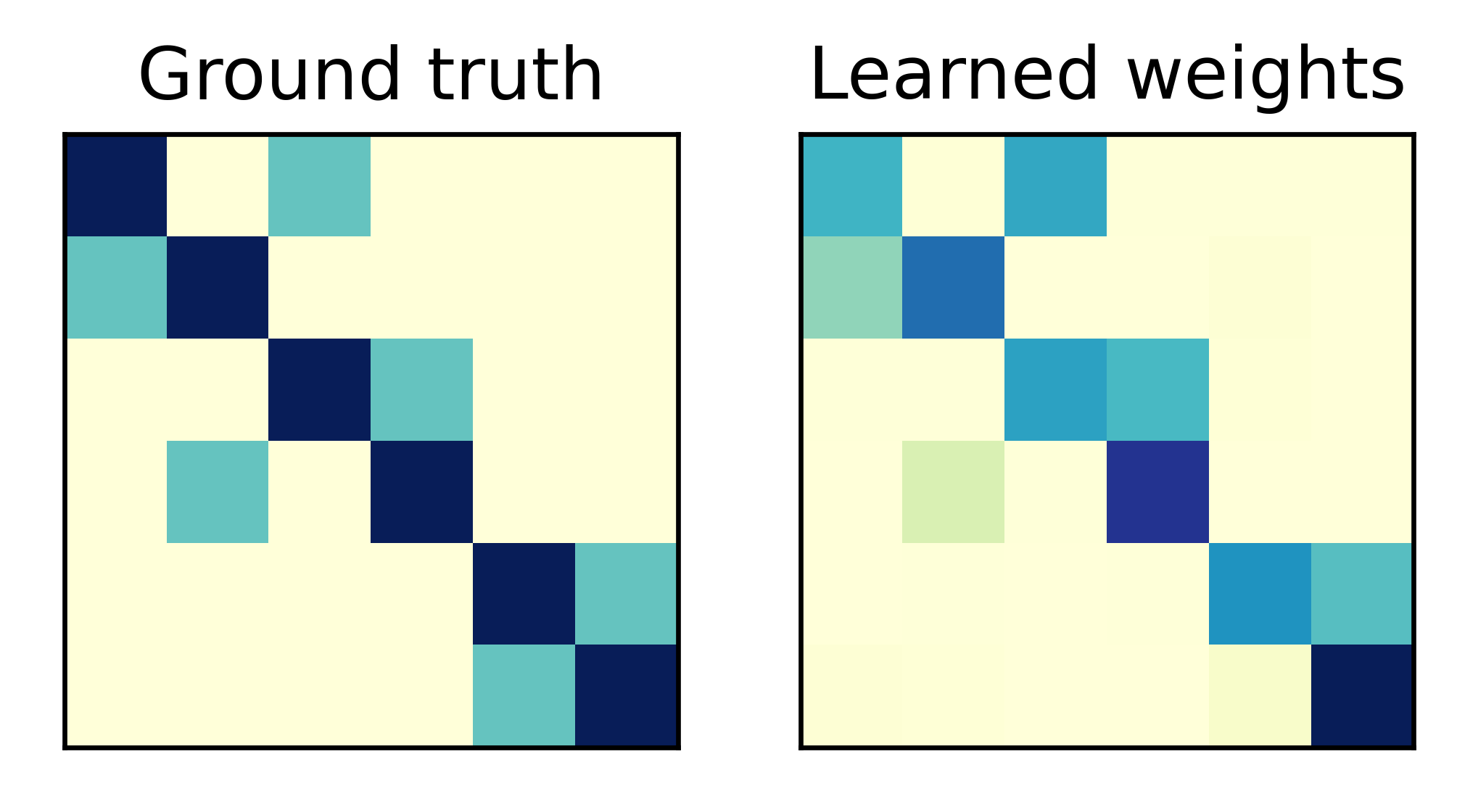}
  \caption{annotator 1}
\end{subfigure}
\begin{subfigure}{0.24\textwidth}
  \centering
  \includegraphics[width=4cm]{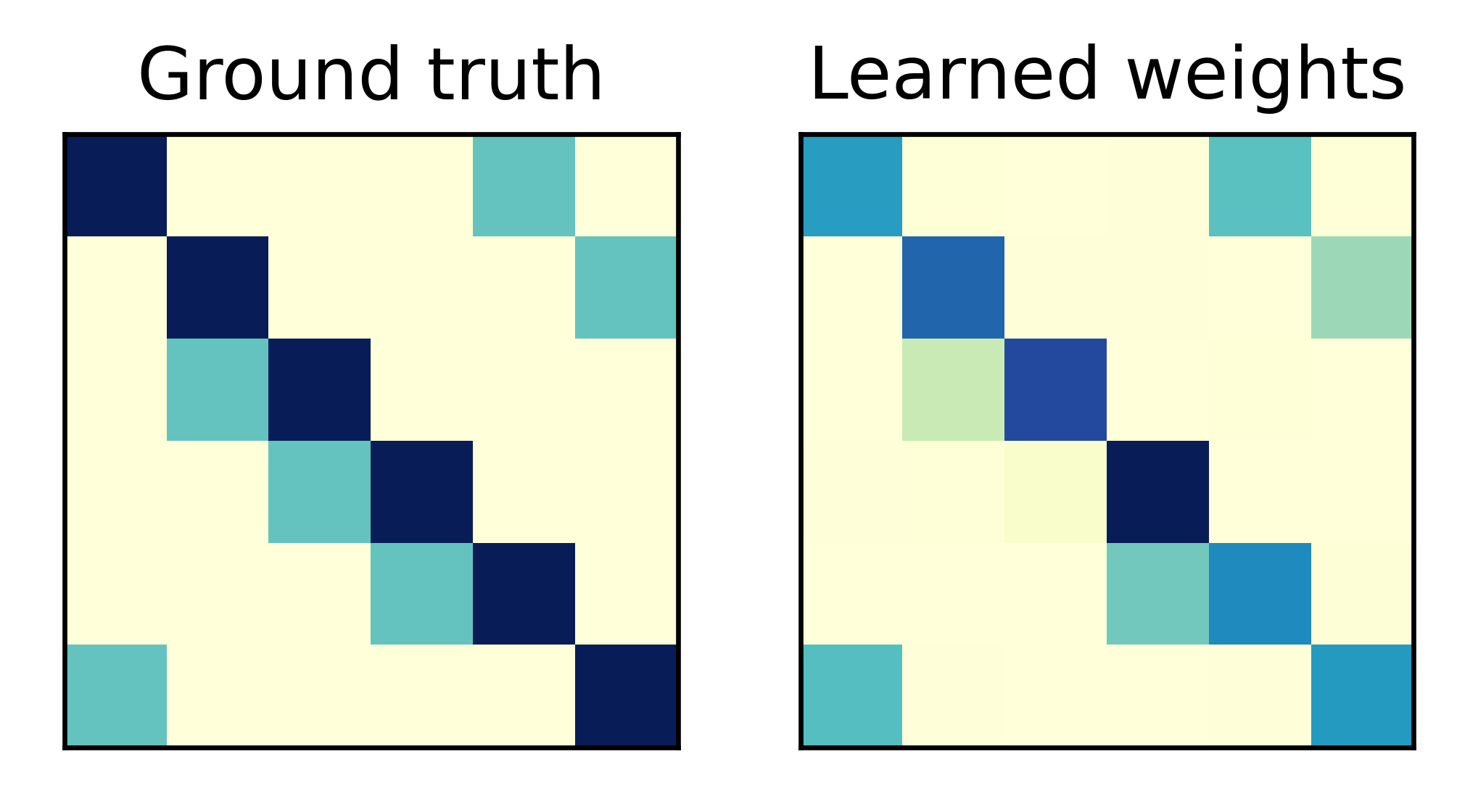}
  \caption{annotator 2}
\end{subfigure}
\begin{subfigure}{0.24\textwidth}
  \centering
  \includegraphics[width=4cm]{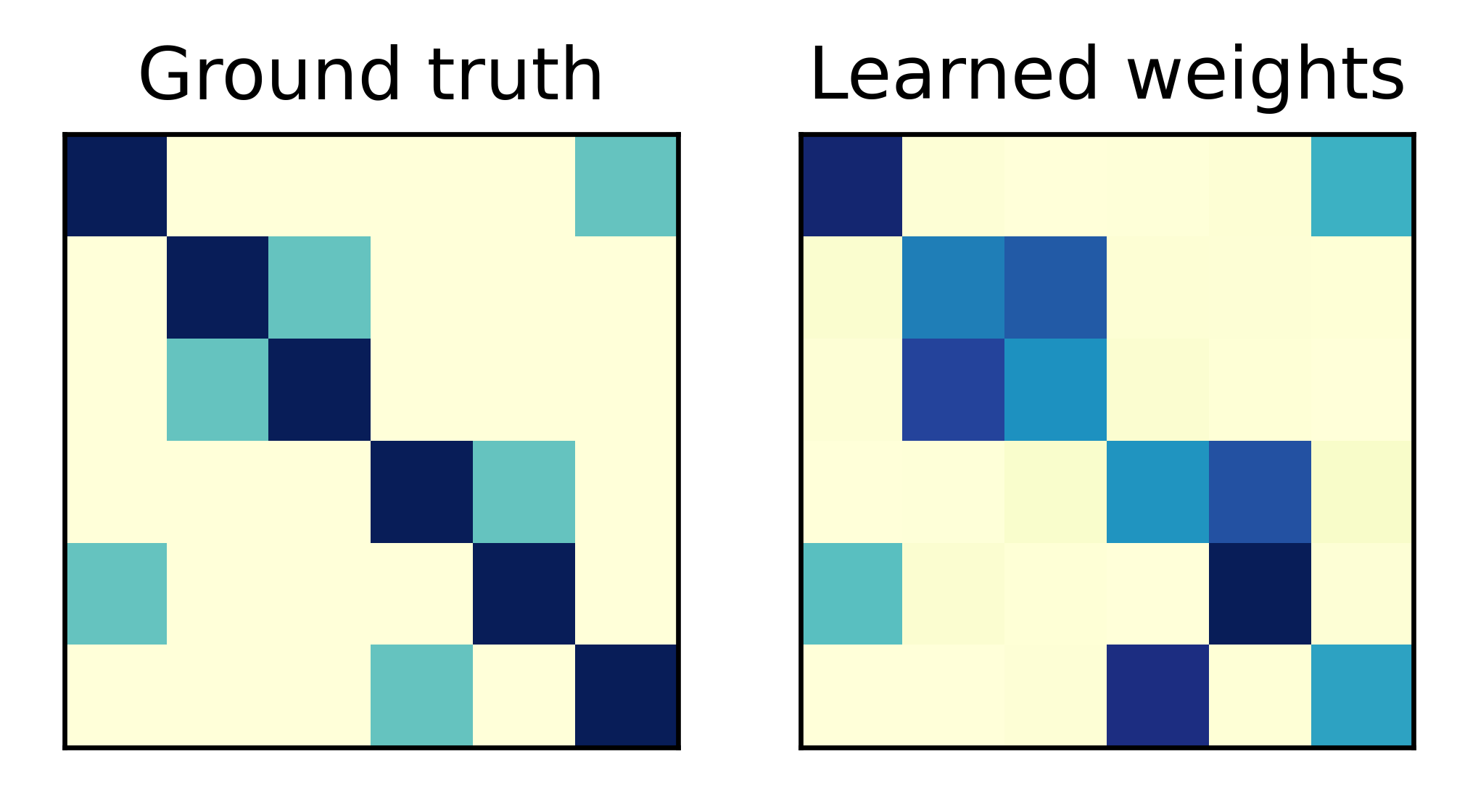}
  \caption{annotator 3}
\end{subfigure}

\begin{subfigure}{0.24\textwidth}
  \centering
  \includegraphics[width=4cm]{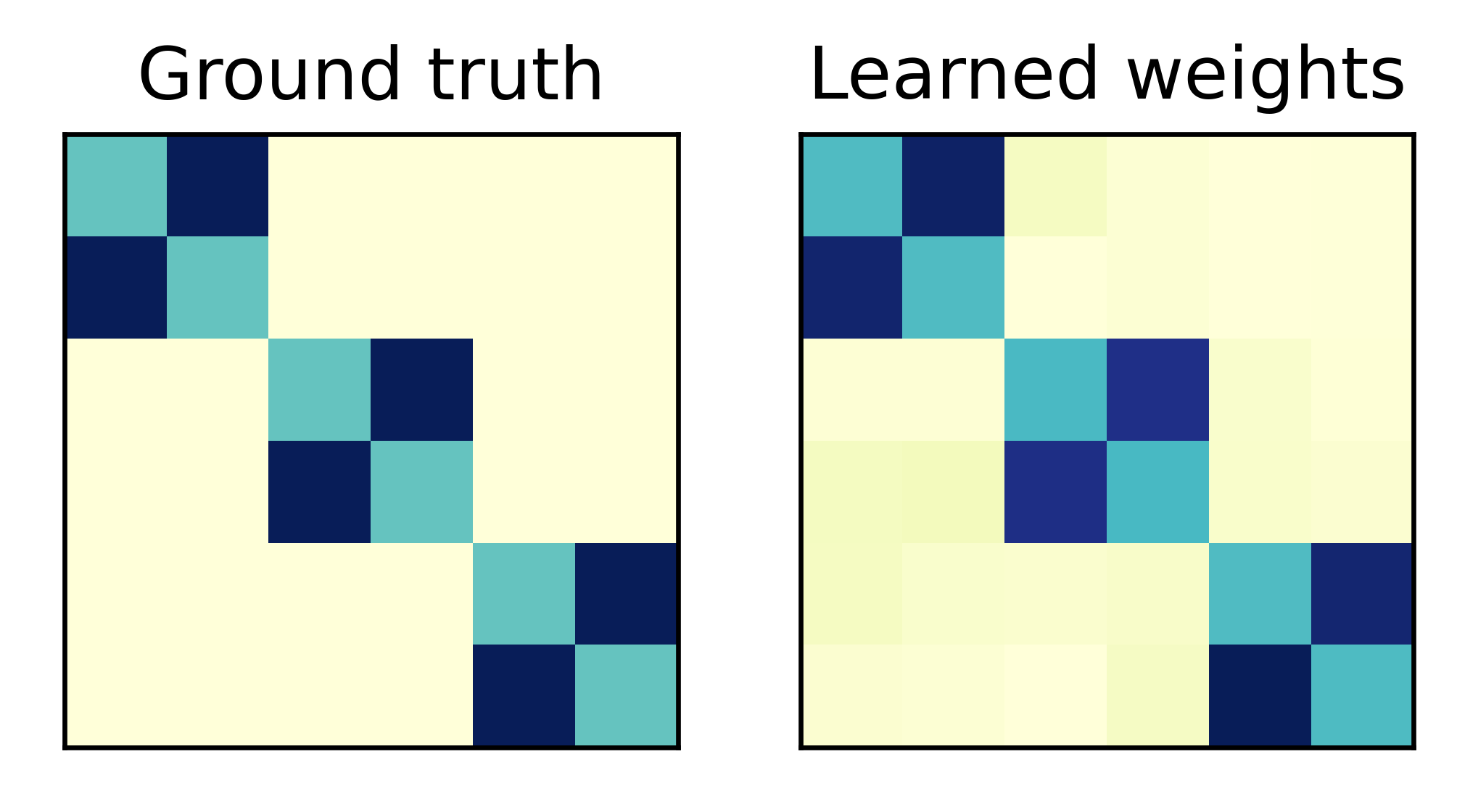}
  \caption{common noise}
\end{subfigure}
\begin{subfigure}{0.24\textwidth}
  \centering
  \includegraphics[width=4cm]{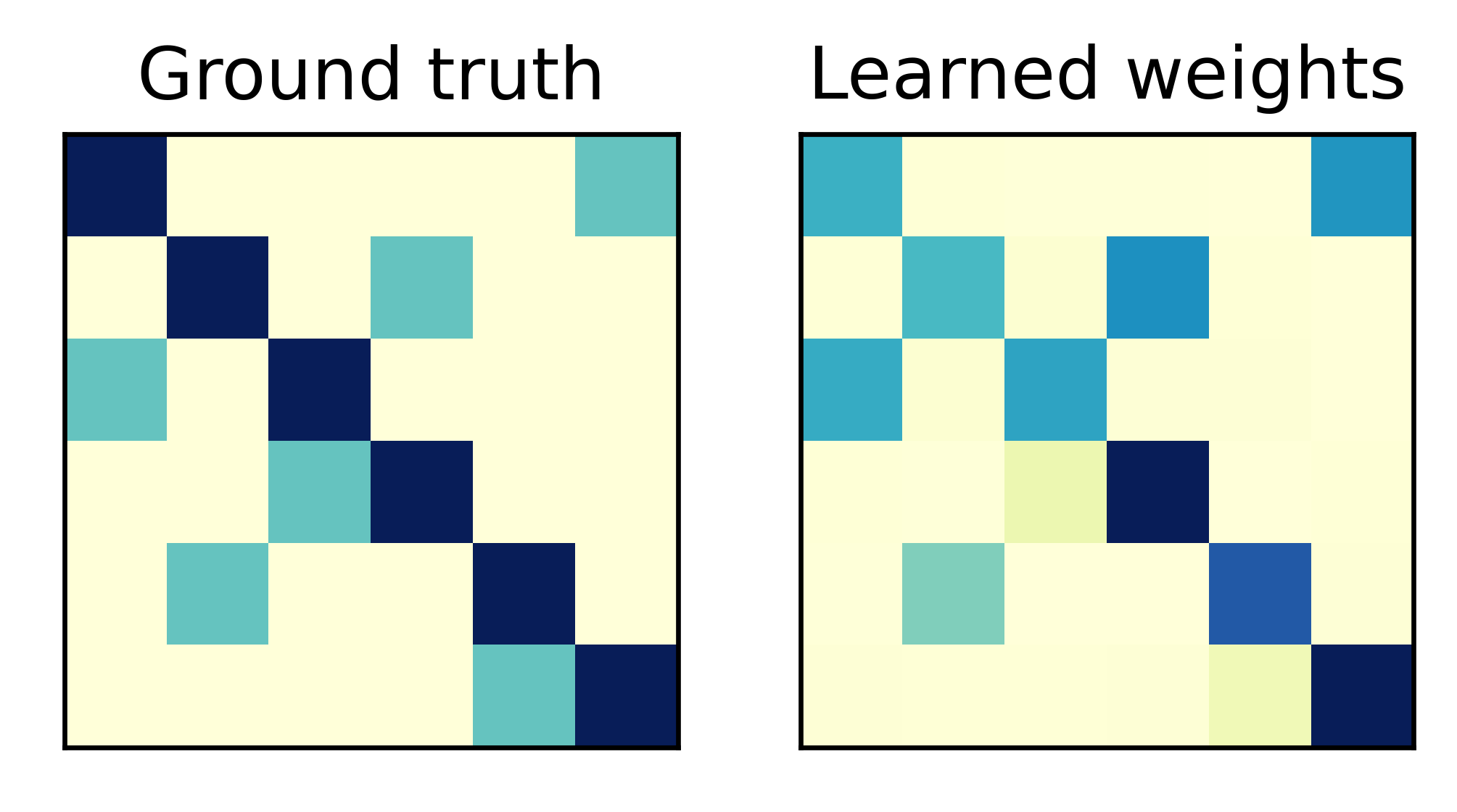}
  \caption{annotator 1}
\end{subfigure}
\begin{subfigure}{0.24\textwidth}
  \centering
  \includegraphics[width=4cm]{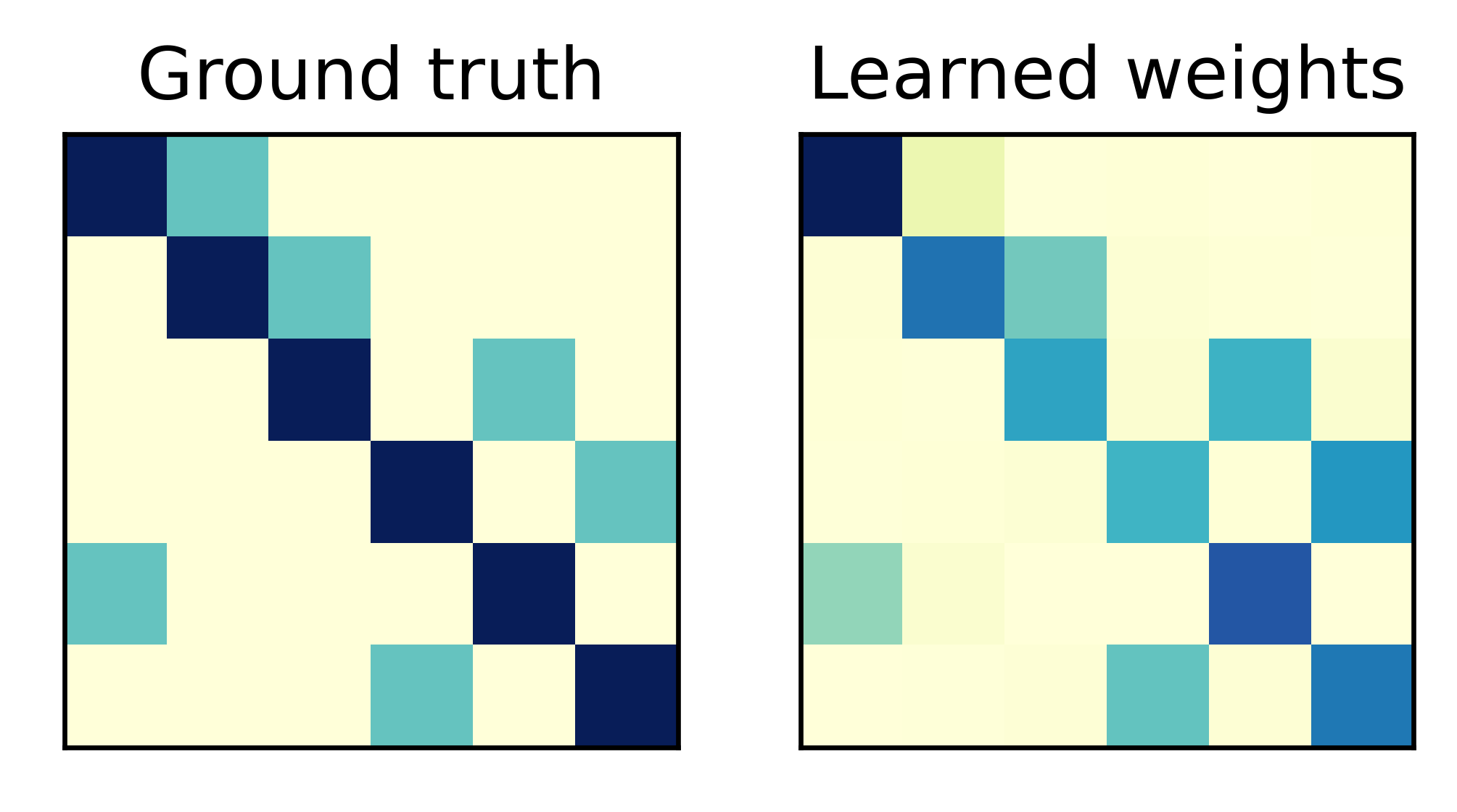}
  \caption{annotator 2}
\end{subfigure}
\begin{subfigure}{0.24\textwidth}
  \centering
  \includegraphics[width=4cm]{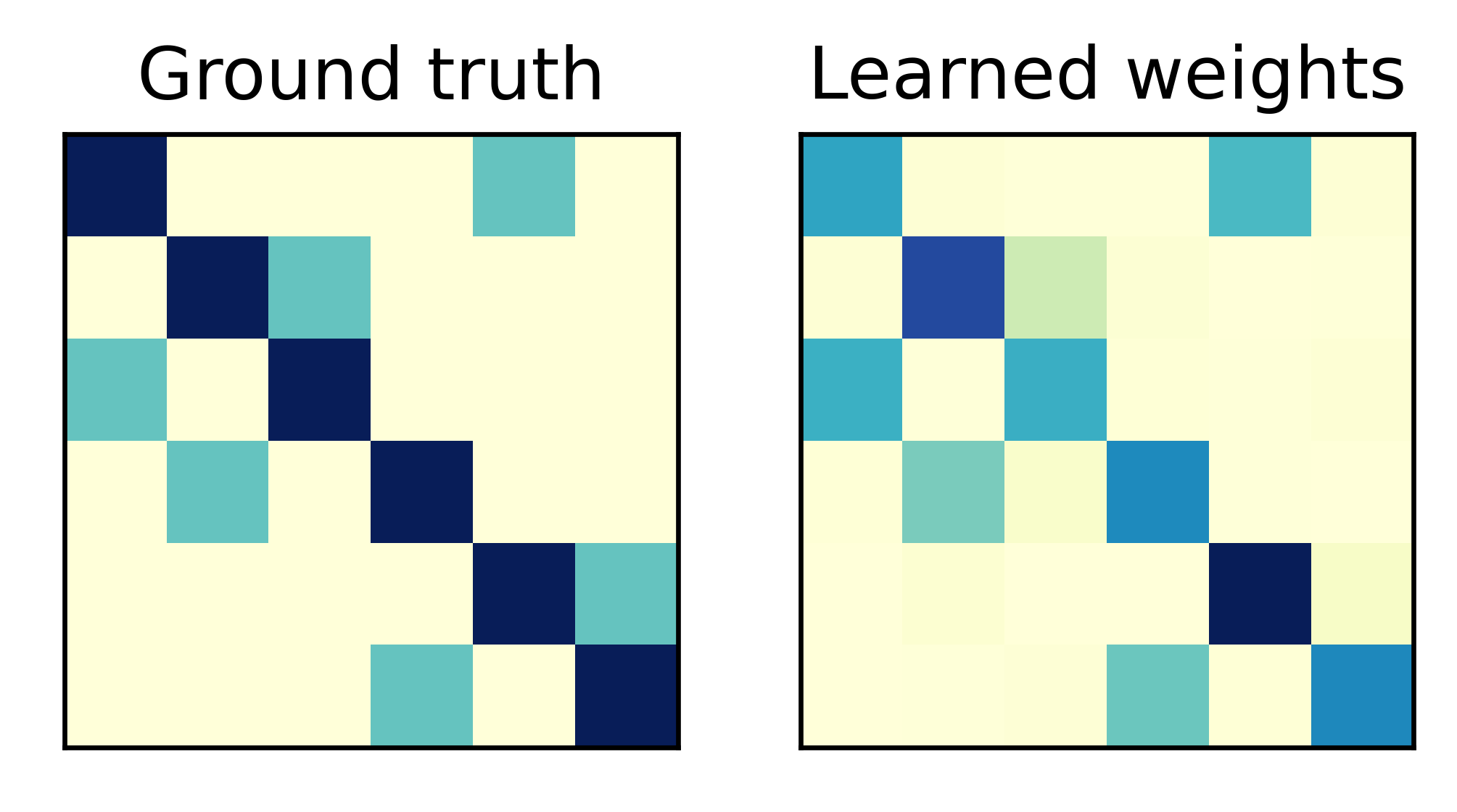}
  \caption{annotator 3}
\end{subfigure}
\caption{Comparison between ground truth confusion matrices and learned ones on Synthetic dataset. The top row is the result of asymmetric common noise. The bottom row is the result of symmetric common noise.}
\label{fig:syn_mat_recovery}
\vspace{-1.2em}
\end{figure*}

\section{C. Additional experiment results} 
\label{app:add_results}

\noindent\textbf{Results on Synthetic dataset.} Figure \ref{fig:syn_results} presents the test accuracy on Synthetic dataset under the same settings as we described in Section 3. We report mean and standard deviation of test accuracy on five runs.  The results align with our analysis in Section 3. Under the asymmetric confusion, our proposed approach is robust to the settings of common noise strength and the proportion of common noise. Under the symmetric confusion, all methods' performance is influenced (becomes worse); however, CoNAL still outperforms baselines with a large margin by differentiating the source of noise. 

Figure \ref{fig:syn_mat_recovery} shows the learnt weights of noise adaptation layers against the ground-truth confusion matrices on the Synthetic dataset. We set the common noise strength to 0.7 and the average proportion of common noise around 0.5. We reconstruct the confusion matrix from the learnt weights by normalizing them using softmax on each row. From the results, we can clearly observe most confusion matrices (especially the confusion matrix for common noise) are well recovered under both confusion settings.
\label{app:synthetic_result}

\noindent\textbf{Visualization of learnt confusion matrices on real-world datasets.}
\label{app:vis_real}
We provide visualization of learnt confusion matrices on both real-world datasets in Figure \ref{fig:conf_labelme} and \ref{fig:conf_music}. 
We can clearly observe these two types of learnt confusion matrices, i.e., for common confusion and individual confusion, capture different mistake patterns across annotators. For example, on LabelMe dataset, the commonly made mistake from \emph{inside city} to \emph{street} was covered by the learnt common confusion. The same observation is also obtained on the Music dataset, such as the common mistake from \emph{jazz} to \emph{blues} is reflected in our learnt common confusion matrix. On the other hand, the individual noise on annotators captures their own specific mistakes. For example, on LabelMe dataset, both annotator 1 and 2 confused about \emph{tall building} and \emph{inside city}; but this mistake does not appear in annotator 3, 4 and 5, nor the global confusion matrix. 

We also notice some low-quality annotators in the Music dataset, such as annotator 1 (with ground-truth annotation accuracy of 0.182) and annotator 5 (with ground-truth annotation accuracy of 0.108), whose annotations are almost random. By separately modeling the annotation noise at a per-annotation basis, our solution reduces the influence from such low-quality annotators in learning the common noise model and maintains the quality of inferred true labels overall.

\begin{figure*}[!htp]
\centering
\begin{subfigure}{1\textwidth}
  \centering
  \includegraphics[width=12.8cm]{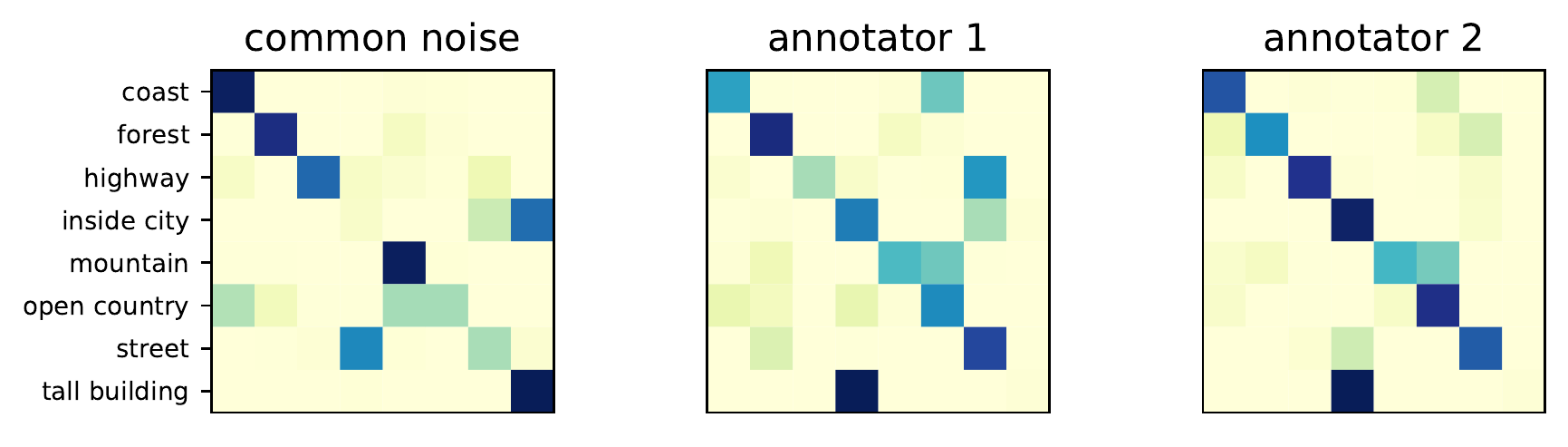}
\end{subfigure}

\begin{subfigure}{1\textwidth}
  \centering
  \includegraphics[width=12.8cm]{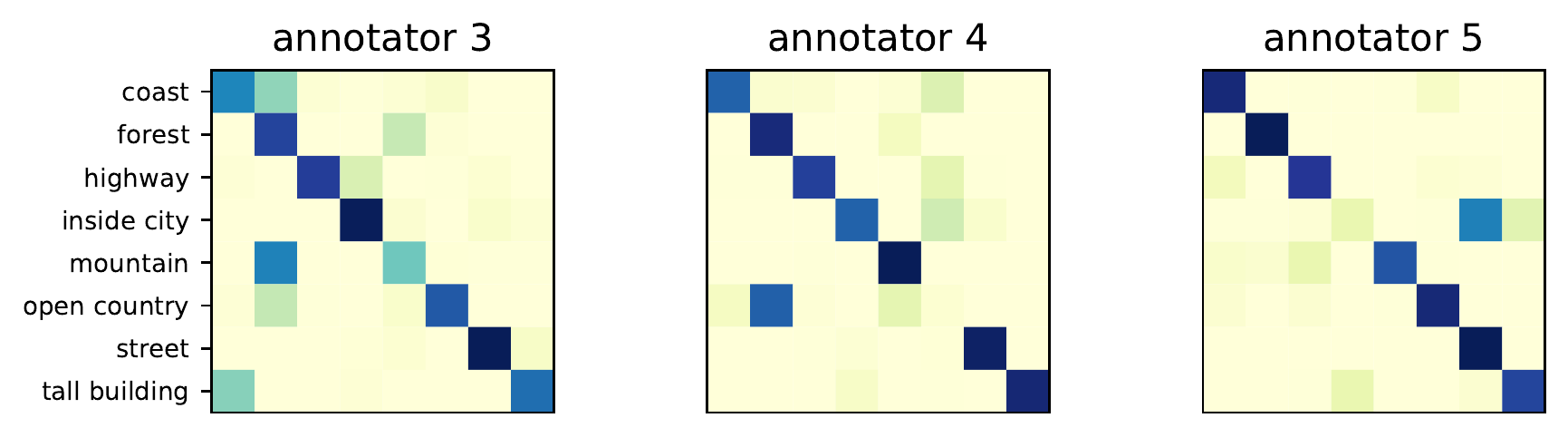}
\end{subfigure}
\caption{Learnt global confusion matrix and individual confusion matrices of 5 annotators on LabelMe dataset.}
\label{fig:conf_labelme}
\vspace{-1.2em}
\end{figure*}

We visualized the distribution of inferred proportion of common noise (i.e., $\omega^r_i$) across annotations to better understand how CoNAL differentiates the source of noise in individual annotations. We rank the instances by their average $\omega^r_i$ over all annotators who have labeled this instance in a descending order. Then we count the frequency of ground-truth labels in the top 50\% and bottom 50\% instances respectively and report the results in Figure \ref{fig:w_dist}. The larger the average $\omega^r_i$ in an instance is, the more likely the annotators made similar mistakes on it (i.e., the common confusion matrix can better explain the observed annotations on this instance). On LabelMe dataset, we can observe that annotators tend to make similar annotations on \emph{forest} (with ground-truth annotation entropy 0.660) and \emph{mountain} (with ground-truth annotation entropy 0.192), but make their own mistakes on \emph{open country} (with ground-truth annotation entropy 1.287) and \emph{inside city} (with ground-truth annotation entropy 1.116). In other words, the annotations on \emph{forest} and \emph{mountain} are much more consistent than those on \emph{open country} and \emph{inside city}. This observation can also be explained by the learnt confusion matrices. From the learnt global confusion matrix, we can observe confusion patterns in \emph{open country} and \emph{inside city} are quite scattered, and different annotators (e.g., all those five visualized annotators) have distinct confusions. While for \emph{forest} and \emph{mountain}, the global confusion matrix correctly maps them to the correct annotation, and individual annotators might occasionally make their own mistakes, e.g., annotator 3. 
Similar observations are also obtained on the Music dataset, where annotators tend to make similar mistakes on \emph{hiphop} and \emph{reggae}, and make their own distinct mistakes on \emph{jazz} and \emph{rock}.

\noindent\textbf{Discussion about overparameterized models.} To prove that the improved performance of our solution comes from its unique modeling of crowdsourced data other than simply an increased number of parameters to fit, we compare our model with the overparameterized DL-CL \cite{rodrigues2018deep}, which has a similar structure as ours to capture individual confusions, but without the notion of modeling common confusion. \citet{rodrigues2018deep} discussed that simply adding more parameters can make the output of the learnt classifier lose its interpretability as a shared ground-truth estimate across annotators, so that they only used one softmax layer for each annotator upon the classifier's output layer. We add another softmax layer for each annotator, to introduce more parameters but avoid losing the interpretability of the bottleneck layer, we name it as DL-CL\_Over.

\begin{table}[!htp]
\centering
\begin{tabular}{c @{\hspace{0.4\tabcolsep}} c @{\hspace{0.4\tabcolsep}} c @{\hspace{0.4\tabcolsep}} c @{\hspace{0.4\tabcolsep}}}
\toprule
 Model & DL-CL & DL-CL\_Over & CoNAL   \\ \midrule 
 \#Params in NAL & $R\times C^2$  & $2 \times R\times C^2$  & $(R + 1)\times C^2$ \\ \midrule
 
LabelMe & 83.27{\footnotesize $\pm 0.52$} & 82.34{\footnotesize $\pm 0.34$}   & 87.12{\footnotesize $\pm 0.55$}   \\ \midrule
Music   & 81.46{\footnotesize $\pm 0.53$}  & 80.47{\footnotesize $\pm 0.27$}  & 84.06{\footnotesize $\pm 0.42$}   \\ \bottomrule
\end{tabular}
\caption{Comparison with the overparameterized model.}
\label{table:over}
\vspace{-3mm}
\end{table}

We present the results on real-world datasets, along with the number of parameters in the noise adaptation layers (NAL). Even though DL-CL\_Over has the most number of parameters to fit, its performance did not increase but decreased, which indicates that blindly adding more parameters will not help model crowdsourced data. Our model adds a global noise adaptation layer, which has fewer parameters than DL-CL\_Over. The results prove the advantage of our model comes from its unique design to annotation confusions, but not simply more parameters to fit.

\begin{figure*}[!thp]
\centering
\begin{subfigure}{1\textwidth}
  \centering
  \includegraphics[width=12.8cm]{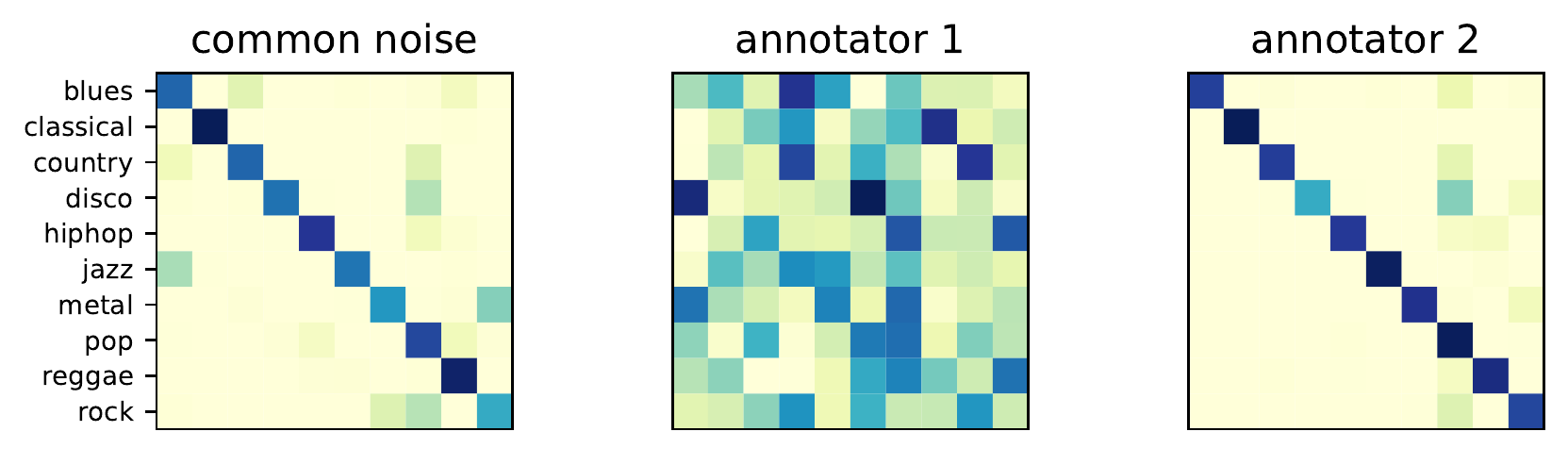}
\end{subfigure}

\begin{subfigure}{1\textwidth}
  \centering
  \includegraphics[width=12.8cm]{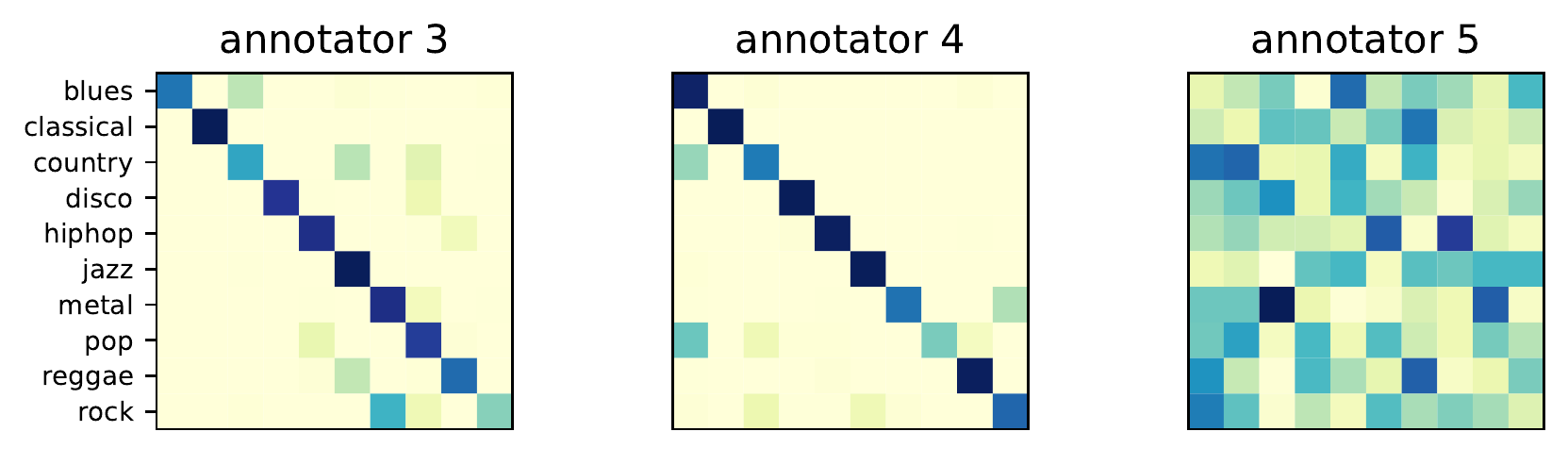}
\end{subfigure}
\caption{Learnt global confusion matrix and individual confusion matrices of 5 annotators on Music dataset.}
\label{fig:conf_music}
\vspace{-1.2em}
\end{figure*}

\begin{figure*}[!h]
\begin{subfigure}{1\textwidth}
  \centering
  \includegraphics[width=13.8cm]{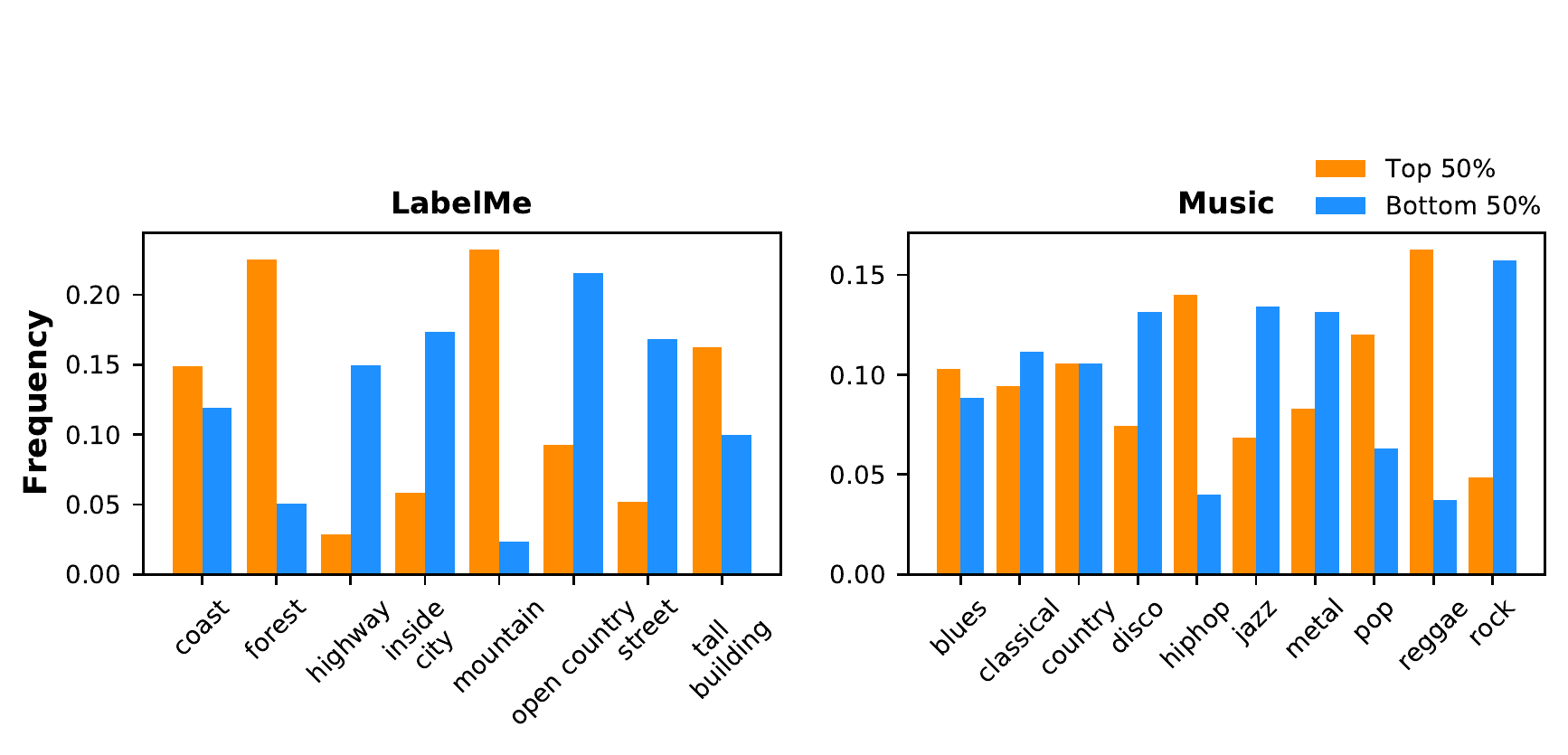}
\end{subfigure}
\caption{$\omega$-label distribution on real-world datasets. We rank the instances by average $\omega$ over all annotators and visualize the ground-truth label distribution of top 50\% and bottom 50\% instances.}
\label{fig:w_dist}
\vspace{-1.2em}
\end{figure*}


\end{document}